\def\eqref#1{equation~\ref{#1}}
\def\1{\bm{1}}
\DeclareMathAlphabet{\mathsfit}{\encodingdefault}{\sfdefault}{m}{sl}
\SetMathAlphabet{\mathsfit}{bold}{\encodingdefault}{\sfdefault}{bx}{n}
\newcommand{\E}{\mathbb{E}}
\newcommand\bertbase{BERT$_{\small \textsc{BASE}}$\xspace}
\newcommand\bertlarge{BERT$_{\small \textsc{LARGE}}$\xspace}
\newcommand\bertft{BERT$_{\small \textsc{FINETUNE}}$\xspace}
\newcommand{\divg}[3][]{\mathcal{D}_{#1}\left({#2} \;\middle\|\; {#3} \right)}
\newcommand{\cmark}{\ding{51}}%
\newcommand{\xmark}{\ding{55}}%
\newtheorem{theorem}{Theorem}
\newenvironment{customthm}[1]
  {\innercustomthm}
  {\endinnercustomthm}
\def\name{UDA\xspace}
\def\jft{JFT~\cite{hinton2015distilling, chollet2017xception}\xspace}
\title{Unsupervised Data Augmentation\\ for Consistency Training}
\author{Qizhe Xie$^{1,2}$, Zihang Dai$^{1,2}$, Eduard Hovy$^2$, Minh-Thang Luong$^1$, Quoc V. Le$^1$\\
$^1$ Google Research, Brain Team, $^2$ Carnegie Mellon University \\
  \texttt{\{qizhex, dzihang, hovy\}@cs.cmu.edu, \{thangluong, qvl\}@google.com} \\
}
\begin{document}

\maketitle

\begin{abstract}
Semi-supervised learning lately has shown much promise in improving deep learning models when labeled data is scarce. Common among recent approaches is the use of consistency training on a large amount of unlabeled data to constrain model predictions to be invariant to input noise. In this work, we present a new perspective on how to effectively noise unlabeled examples and argue that the quality of noising, specifically those produced by advanced data augmentation methods, plays a crucial role in semi-supervised learning. By substituting simple noising operations with advanced data augmentation methods such as RandAugment and back-translation, our method brings substantial improvements across six language and three vision tasks under the same consistency training framework. On the IMDb text classification dataset, with only 20 labeled examples, our method achieves an error rate of 4.20, outperforming the state-of-the-art model trained on 25,000 labeled examples. On a standard semi-supervised learning benchmark, CIFAR-10, our method outperforms all previous approaches and achieves an error rate of 5.43 with only 250 examples. Our method also combines well with transfer learning, e.g., when finetuning from BERT, and yields improvements in high-data regime, such as ImageNet, whether when there is only 10\% labeled data or when a full labeled set with 1.3M extra unlabeled examples is used.\footnote{Code is available at \url{https://github.com/google-research/uda}.}
\end{abstract}

\vspace{-0.5em}
\section{Introduction}
\vspace{-0.5em}
\label{sec:intro}

A fundamental weakness of deep learning is that it typically requires a lot of labeled data to work well. 
Semi-supervised learning (SSL) ~\cite{chapelle2009semi} is one of the most promising paradigms of leveraging unlabeled data to address this weakness. The recent works in SSL are diverse but those that are based on consistency training~\cite{bachman2014learning,rasmus2015semi,laine2016temporal,tarvainen2017mean} have  shown to work well on many benchmarks.

In a nutshell, consistency training methods simply regularize model predictions to be invariant to small noise applied to either input examples~\cite{miyato2018virtual, sajjadi2016regularization, clark2018semi} or hidden states~\cite{bachman2014learning, laine2016temporal}. 
This framework makes sense intuitively because a good model should be robust to any small change in an input example or hidden states.
Under this framework, different methods in this category differ mostly in how and where the noise injection is applied.  Typical noise injection methods are additive Gaussian noise, dropout noise or adversarial noise.

In this work, we investigate the role of noise injection in consistency training 
and observe that advanced data augmentation methods, specifically those work best in supervised learning \cite{simard1998transformation,krizhevsky2012imagenet,cubuk2018autoaugment,yu2018qanet}, also perform well in semi-supervised learning.
There is indeed a strong correlation between the performance of data augmentation operations in supervised learning and their performance in consistency training. We, hence, propose to substitute the traditional noise injection methods with high quality data augmentation methods in order to improve consistency training.
To emphasize the use of better data augmentation in consistency training, we name our method Unsupervised Data Augmentation or UDA.

We evaluate \name on a wide variety of language and vision tasks. 
On six text classification tasks, our method achieves significant improvements over state-of-the-art models. Notably, on IMDb, \name with 20 labeled examples outperforms the state-of-the-art model trained on 1250x more labeled data. On standard semi-supervised learning benchmarks CIFAR-10 and SVHN, 
UDA outperforms all existing semi-supervised learning methods by significant margins and achieves an error rate of 5.43 and 2.72 with 250 labeled examples respectively. % Furthermore, with a better architecture, PyramidNet+ShakeDrop, UDA achieves a new state-of-the-art error rate of 2.7 with 4,000 labeled examples, nearly matching the performance of models trained on 50,000 labeled examples.  
% On SVHN, UDA achieves an error rate of 2.55 with only 1,000 labeled examples. 
Finally, we also find \name to be beneficial when there is a large amount of supervised data. 
For instance, on ImageNet, \name leads to improvements of top-1 accuracy from $58.84$ to $68.78$ with $10\%$ of the labeled set and from $78.43$ to $79.05$ when we use the full labeled set and an external dataset with $1.3$M unlabeled examples.

Our key contributions and findings can be summarized as follows:
\begin{itemize}[leftmargin=*,itemsep=0em,topsep=0em]
    \item First, we show that state-of-the-art data augmentations found in supervised learning can also serve as a superior source of noise under the consistency enforcing semi-supervised framework. {\it See results in Table~\ref{tab:cifar10_sup_vs_unsup} and Table~\ref{tab:yelp_5_sup_vs_unsup}.}
    \item Second, we show that UDA can match and even outperform purely supervised learning that uses orders of magnitude more labeled data. {\it See results in Table~\ref{tab:text_results} and Figure~\ref{fig:cifar_svhn_vary_sup}.}
    
    {\it State-of-the-art results for both vision and language tasks are reported in Table~\ref{tab:published_results} and \ref{tab:text_results}. The effectiveness of UDA across different training data sizes are highlighted in Figure~\ref{fig:cifar_svhn_vary_sup} and \ref{fig:text_vary_sup}.}
    \item Third, we show that UDA combines well with transfer learning, e.g., when fine-tuning from BERT ({\it see Table~\ref{tab:text_results}}), and is effective at high-data regime, e.g. on ImageNet ({\it see Table~\ref{tab:imagenet}}).
    \item Lastly, we also provide a theoretical analysis of how \name improves the classification performance and the corresponding role of the state-of-the-art augmentation in Section \ref{sec:theory}.
\end{itemize}

\vspace{-0.5em}
\section{Unsupervised Data Augmentation (UDA)}
\label{sec:method}
\vspace{-0.5em}
In this section, we first formulate our task and then present the key method and insights behind \name.
Throughout this paper, we focus on classification problems and will use $x$ to denote the input and $y^*$ to denote its ground-truth prediction target. We are interested in learning a model $p_{\theta}(y \mid x)$ to predict $y^*$ based on the input $x$, where $\theta$ denotes the model parameters.
Finally, we will use $p_L(x)$ and $p_U(x)$ to denote the distributions of labeled and unlabeled examples respectively and use $f^*$ to denote the perfect classifier that we hope to learn. 

\vspace{-0.5em}
\subsection{Background: Supervised Data Augmentation}
\label{sec:sda}
\vspace{-0.5em}

Data augmentation aims at creating novel and realistic-looking training data by applying a transformation to an example, without changing its label.
Formally, let $q(\hat{x} \mid x)$ be the augmentation transformation from which one can draw augmented examples $\hat{x}$ based on an original example $x$.
For an augmentation transformation to be valid, it is required that any example $\hat{x} \sim q(\hat{x} \mid x)$ drawn from the distribution shares the same ground-truth label as $x$.
Given a valid augmentation transformation, we can simply minimize the negative log-likelihood on augmented examples. 

Supervised data augmentation can be equivalently seen as constructing an augmented labeled set from the original supervised set and then training the model on the augmented set. Therefore, the augmented set needs to provide additional inductive biases to be more effective. How to design the augmentation transformation has, thus, become critical. 

In recent years, there have been significant advancements on the design of data augmentations for NLP~\cite{yu2018qanet}, vision~\cite{krizhevsky2012imagenet,cubuk2018autoaugment} and speech~\cite{hannun2014deep,park2019specaugment} in supervised settings.
Despite the promising results, data augmentation is mostly regarded as the ``cherry on the cake'' which provides a steady but limited performance boost because these augmentations has so far only been applied to a set of labeled examples which is usually of a small size. 
Motivated by this limitation, via the consistency training framework, we extend the advancement in supervised data augmentation to semi-supervised learning where abundant unlabeled data is available. 

\vspace{-0.3em}
\subsection{Unsupervised Data Augmentation}
\label{sec:uda}
\vspace{-0.3em}

As discussed in the introduction, a recent line of work in semi-supervised learning has been utilizing unlabeled examples to enforce smoothness of the model.
The general form of these works can be summarized as follows:
\begin{itemize}[leftmargin=*,itemsep=0em,topsep=0em]
\item Given an input $x$, compute the output distribution $p_{\theta}(y \mid x)$ given $x$ and a noised version $p_{\theta}(y \mid x, \epsilon)$ by injecting a small noise $\epsilon$. 
The noise can be applied to $x$ or hidden states.
\item Minimize a divergence metric between the two distributions $\divg{p_{\theta}(y \mid x)}{p_{\theta}(y \mid x, \epsilon)}$.
\end{itemize}
This procedure enforces the model to be insensitive to the noise $\epsilon$ and hence smoother with respect to changes in the input (or hidden) space. 
From another perspective, minimizing the consistency loss gradually propagates label information from labeled examples to unlabeled ones.

In this work, we are interested in a particular setting where the noise is injected to the input $x$, i.e., $\hat{x} = q(x, \epsilon)$, as considered by prior works~\cite{sajjadi2016regularization, laine2016temporal, miyato2018virtual}.
But different from existing work, we focus on the unattended question of how the form or ``quality'' of the noising operation $q$ can influence the performance of this consistency training framework.
Specifically, to enforce consistency, prior methods generally employ simple noise injection methods such as adding Gaussian noise, simple input augmentations to noise unlabeled examples. 
In contrast, we hypothesize that stronger data augmentations in supervised learning can also lead to superior performance when used to noise unlabeled examples in the semi-supervised consistency training framework, since it has been shown that more advanced data augmentations that are more diverse and natural can lead to significant performance gain in the supervised setting.

Following this idea, we propose to use a rich set of state-of-the-art data augmentations verified in various supervised settings to inject noise and optimize the same consistency training objective on unlabeled examples. When jointly trained with labeled examples, we utilize a weighting factor $\lambda$ to balance the supervised cross entropy and the unsupervised consistency training loss, which is illustrated in Figure \ref{fig:illustration}. Formally, the full objective can be written as follows:
\begin{equation}
\label{eq:uda}
\min_{\theta}\; \mathcal{J}(\theta) = \E_{x_1\sim p_L(x)}\left[-\log p_{\theta} (f^*(x_1) \mid x_1) \right] + \lambda \E_{x_2 \sim p_U(x)} \E_{\hat{x} \sim q(\hat{x} \mid x_2)} \left[\mathrm{CE}\left(p_{\tilde{\theta}} (y \mid x_2) \| p_{\theta} (y \mid \hat{x}) \right) \right]
\end{equation}
where CE denotes cross entropy, $q(\hat{x} \mid x)$ is a data augmentation transformation and $\tilde{\theta}$ is a \textit{fixed} copy of the current parameters $\theta$ indicating that the gradient is not propagated through $\tilde{\theta}$, as suggested by VAT~\cite{miyato2018virtual}. 
We set $\lambda$ to $1$ for most of our experiments. In practice, in each iteration, we compute the supervised loss on a mini-batch of labeled examples and compute the consistency loss on a mini-batch of unlabeled data. The two losses are then summed for the final loss. We use a larger batch size for the consistency loss. 

In the vision domain, simple augmentations including cropping and flipping are applied to labeled examples. To minimize the discrepancy between supervised training and prediction on unlabeled examples, we apply the same simple augmentations to unlabeled examples for computing $p_{\tilde{\theta}} (y \mid x)$.

\begin{figure}
\vspace{-3em}
    \centering
    \includegraphics[width=0.85\textwidth]{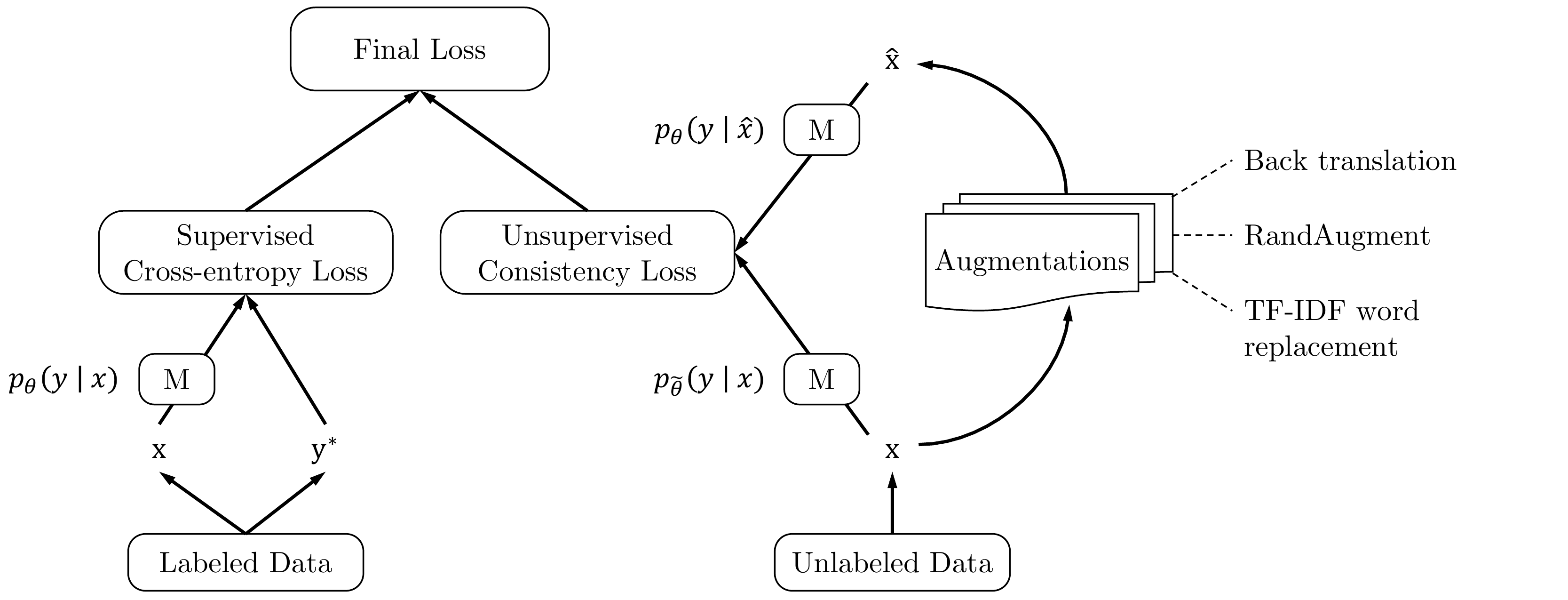}
    \caption{Training objective for \name, where M is a model that predicts a distribution of $y$ given $x$.}
    \label{fig:illustration}
\vspace{-1em}
\end{figure}

\textbf{Discussion.}
Before detailing the augmentation operations used in this work, we first provide some intuitions on how more advanced data augmentations can provide extra advantages over simple ones used in earlier works from three aspects: 
\begin{itemize}[leftmargin=*,itemsep=0em,topsep=0em]
\item \textbf{Valid noise}: Advanced data augmentation methods that achieve great performance in supervised learning usually generate realistic augmented examples that share the same ground-truth labels with the original example.
Thus, it is safe to encourage the consistency between predictions on the original unlabeled example and the augmented unlabeled examples.

\item \textbf{Diverse noise}: Advanced data augmentation can generate a diverse set of examples since it can make large modifications to the input example without changing its label, while simple Gaussian noise only make local changes. Encouraging consistency on a diverse set of augmented examples can significantly improve the sample efficiency. 

\item \textbf{Targeted inductive biases}: Different tasks require different inductive biases. Data augmentation operations that work well in supervised training essentially provides the missing inductive biases.
\end{itemize}

\vspace{-0.3em}
\subsection{Augmentation Strategies for Different Tasks} 
\vspace{-0.3em}
\label{sec:data_aug_for_task}
We now detail the augmentation methods, tailored for different tasks, that we use in this work.

\textbf{RandAugment for Image Classification.}
We use a data augmentation method called RandAugment~\cite{cubuk2019RandAugment}, which is inspired by AutoAugment~\cite{cubuk2018autoaugment}. AutoAugment uses a search method to combine all image processing transformations in the Python Image Library (PIL) to find a good augmentation strategy.
In RandAugment, we do not use search, but instead uniformly sample from the same set of augmentation transformations in PIL. In other words, RandAugment is simpler and requires no labeled data as there is no need to search for optimal policies.

\textbf{Back-translation for Text Classification.} 
When used as an augmentation method, back-translation~\cite{sennrich2015improving, edunov2018understanding} refers to the procedure of translating an existing example $x$ in language $A$ into another language $B$ and then translating it back into $A$ to obtain an augmented example $\hat{x}$.
As observed by \cite{yu2018qanet}, back-translation can generate diverse paraphrases while preserving the semantics of the original sentences, leading to significant performance improvements in question answering. 
In our case, we use back-translation to paraphrase the training data of our text classification tasks.\footnote{We also note that while translation uses a labeled dataset, the translation task itself is quite distinctive from a text classification task and does not make use of any text classification label. In addition, back-translation is a general data augmentation method that can be applied to many tasks with the same model checkpoints.}

We find that the diversity of the paraphrases is important. Hence, we employ random sampling with a tunable temperature instead of beam search for the generation. As shown in Figure \ref{fig:augmentation}, the paraphrases generated by back-translation sentence are diverse and have similar semantic meanings.
More specifically, we use WMT'14 English-French translation models (in both directions) to perform back-translation on each sentence.
To facilitate future research, we have open-sourced our back-translation system together with the translation checkpoints.

\begin{figure}[h]
\vspace{-0.7em}
    \centering
    \includegraphics[width=0.85\textwidth]{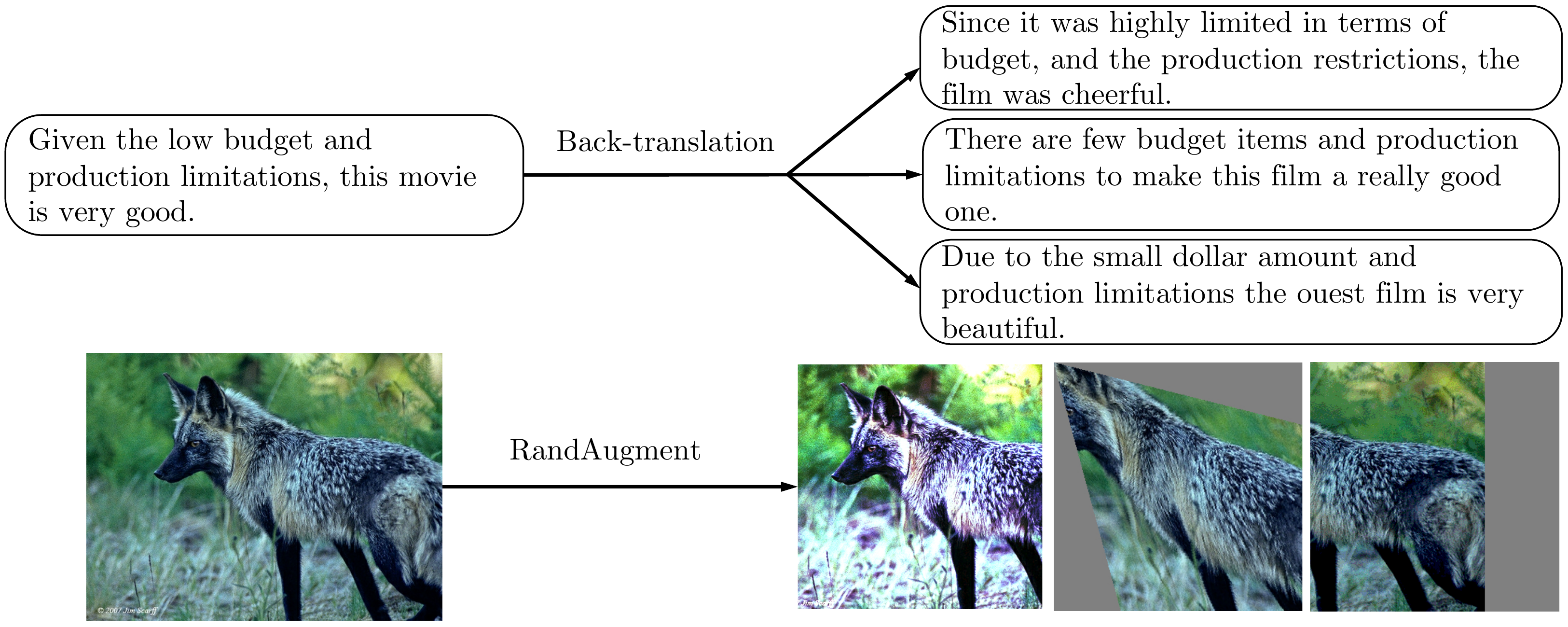}
    \caption{Augmented examples using back-translation and RandAugment. }
    \label{fig:augmentation}
\vspace{-0.7em}

\end{figure}

\textbf{Word replacing with TF-IDF for Text Classification.} While back-translation is good at maintaining the global semantics of a sentence, there is little control over which words will be retained. This requirement is important for topic classification tasks, such as DBPedia, in which some keywords are more informative than other words in determining the topic. We, therefore, propose an augmentation method that replaces uninformative words with low TF-IDF scores while keeping those with high TF-IDF values. We refer readers to Appendix \ref{sec:apdx_augmentations} for a detailed description.

\vspace{-0.3em}
\subsection{Additional Training Techniques}
\label{sec:apdx_training_technique}
\vspace{-0.3em}

In this section, we present additional techniques targeting at some commonly encountered problems.

\textbf{Confidence-based masking.} We find it to be helpful to mask out examples that the current model is not confident about. Specifically, in each minibatch, the consistency loss term is computed only on examples whose highest probability among classification categories is greater than a threshold $\beta$. We set the threshold $\beta$ to a high value. Specifically, $\beta$ is set to 0.8 for CIFAR-10 and SVHN and 0.5 for ImageNet. 

\textbf{Sharpening Predictions.}
\label{sec:sharpending_predictions}
Since regularizing the predictions to have low entropy has been shown to be beneficial~\cite{grandvalet2005semi,miyato2018virtual}, we sharpen predictions when computing the target distribution on unlabeled examples by using a low Softmax temperature $\tau$. 
When combined with confidence-based masking, the loss on unlabeled examples $\E_{x \sim p_U(x)} \E_{\hat{x} \sim q(\hat{x} \mid x)} \left[\mathrm{CE}\left(p_{\tilde{\theta}} (y \mid x) \| p_{\theta} (y \mid \hat{x}) \right) \right]$ on a minibatch $B$ is computed as:
\begin{equation*}
    \frac{1}{|B|} \sum_{x\in B} I(\max_{y'}p_{\tilde{\theta}}(y' \mid x)>\beta)\mathrm{CE}\left(p^{(sharp)}_{\tilde{\theta}} (y \mid x) \| p_{\theta} (y \mid \hat{x}) \right) 
\end{equation*}
\begin{equation*}
    p^{(sharp)}_{\tilde{\theta}} (y \mid x) = \frac{\exp(z_{y} / \tau)}{\sum_{y'} \exp(z_{y'} / \tau)}
\end{equation*}
where $I(\cdot)$ is the indicator function, $z_y$ is the logit of label $y$ for example $x$. We set $\tau$ to 0.4 for CIFAR-10, SVHN and ImageNet.

\textbf{Domain-relevance Data Filtering.} 
\label{sec:domain_relevance}
Ideally, we would like to make use of out-of-domain unlabeled data since it is usually much easier to collect, but the class distributions of out-of-domain data are mismatched with those of in-domain data, which can result in performance loss if directly used~\cite{oliver2018realistic}.
To obtain data relevant to the domain for the task at hand, we adopt a common technique for detecting out-of-domain data. 
We use our baseline model trained on the in-domain data to infer the labels of data in a large out-of-domain dataset and pick out examples that the model is most confident about. 
Specifically, for each category, we sort all examples based on the classified probabilities of being in that category and select the examples with the highest probabilities.

\vspace{-0.5em}
\section{Theoretical Analysis}
\vspace{-0.5em}
\label{sec:theory}
In this section, we theoretically analyze why \name can improve the performance of a model and the required number of labeled examples to achieve a certain error rate.
Following previous sections, we will use $f^*$ to denote the perfect classifier that we hope to learn, use $p_U$ to denote the marginal distribution of the unlabeled data and use $q(\hat{x}\mid x)$ to denote the augmentation distribution.

To make the analysis tractable, we make the following simplistic assumptions about the data augmentation transformation: 
% (and maybe generalize later?)
\begin{itemize}[leftmargin=*,itemsep=0em,topsep=0em]
    \item \textbf{In-domain} augmentation: data examples generated by data augmentation have non-zero probability under $p_U$, i.e., $p_U(\hat{x}) > 0$ for $\hat{x} \sim q(\hat{x} \mid x), x \sim p_U(x)$.
    \item \textbf{Label-preserving} augmentation: data augmentation preserves the label of the original example, i.e., $f^*(x)=f^*(\hat{x})$ for $\hat{x} \sim q(\hat{x} \mid x), x \sim p_U(x)$.
    \item \textbf{Reversible} augmentation: the data augmentation operation can be reversed, i.e., if $q(\hat{x} \mid x) >0$ then $q(x \mid \hat{x})>0$ .
\end{itemize}

As the first step, we hope to provide an intuitive sketch of our formal analysis.
Let us define a graph $G_{p_U}$ where each node corresponds to a data sample $x \in X$ and an edge $(\hat{x}, x)$ exists in the graph \textit{if and only if} $q(\hat{x} \mid x) > 0$. 
Due to the label-preserving assumption, it is easy to see that examples with different labels must reside on different components (disconnected sub-graphs) of the graph $G_{p_U}$.
Hence, for an $N$-category classification problems, the graph has  $N$ components (sub-graphs) when all examples within each category can be traversed by the augmentation operation.
Otherwise, the graph will have more than $N$ components.

Given this construction, notice that for each component $C_i$ of the graph, as long as there is a single labeled example in the component, i.e. $(x^*, y^*) \in C_i$, one can propagate the label of the node to the rest of the nodes in $C_i$ by traversing  $C_i$ via the augmentation operation $q(\hat{x} \mid x)$.
More importantly, if one only performs \textit{supervised data augmentation}, one can only propagate the label information to the directly connected neighbors of the labeled node.
In contrast, performing \textit{unsupervised data augmentation} ensures the traversal of the entire sub-graph $C_i$.
This provides the first high-level intuition how \name could  help.

Taking one step further, in order to find a perfect classifier via such label propagation, it requires that there exists at least one labeled example in each component.
In other words, the number of components lower bounds the minimum amount of labeled examples needed to learn a perfect classifier.
Importantly, number of components is actually decided by the quality of the augmentation operation: an ideal augmentation should be able to reach all other examples of the same category given a starting instance.
This well matches our discussion of the benefits of state-of-the-art data augmentation methods in generating more diverse examples.
Effectively, the augmentation diversity leads to more neighbors for each node, and hence reduces the number of components in a graph. 

Since supervised data augmentation only propagates the label information to the directly connected neighbors of the labeled nodes.
Advanced data augmentation that has a high accuracy must lead to a graph where each node has more neighbors. Effectively, such a graph has more edges and better connectivity. Hence, it is also more likely that this graph will have a smaller number of components. 
To further illustrate this intuition, in Figure \ref{fig:alg_compare}, we provide a comparison between different algorithms.

\begin{figure}[ht]
\begin{subfigure}{.19\textwidth}
  \centering
  \includegraphics[width=0.8\linewidth]{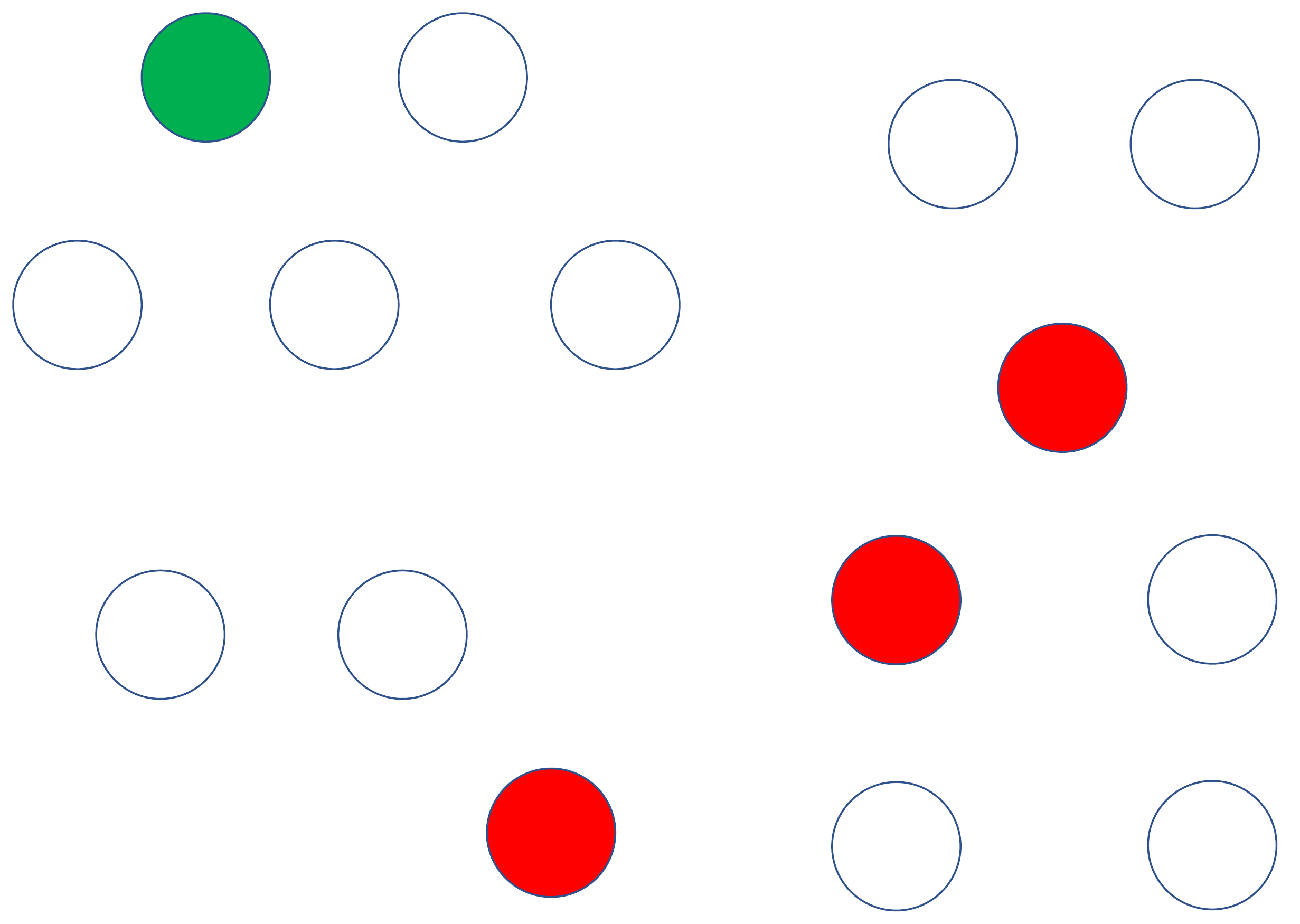}  
  \caption{Supervised learning. \\(4/15)}
  \label{fig:sub-first}
\end{subfigure}
\hfill
\begin{subfigure}{.19\textwidth}
  \centering
  % include second image
  \includegraphics[width=0.8\linewidth]{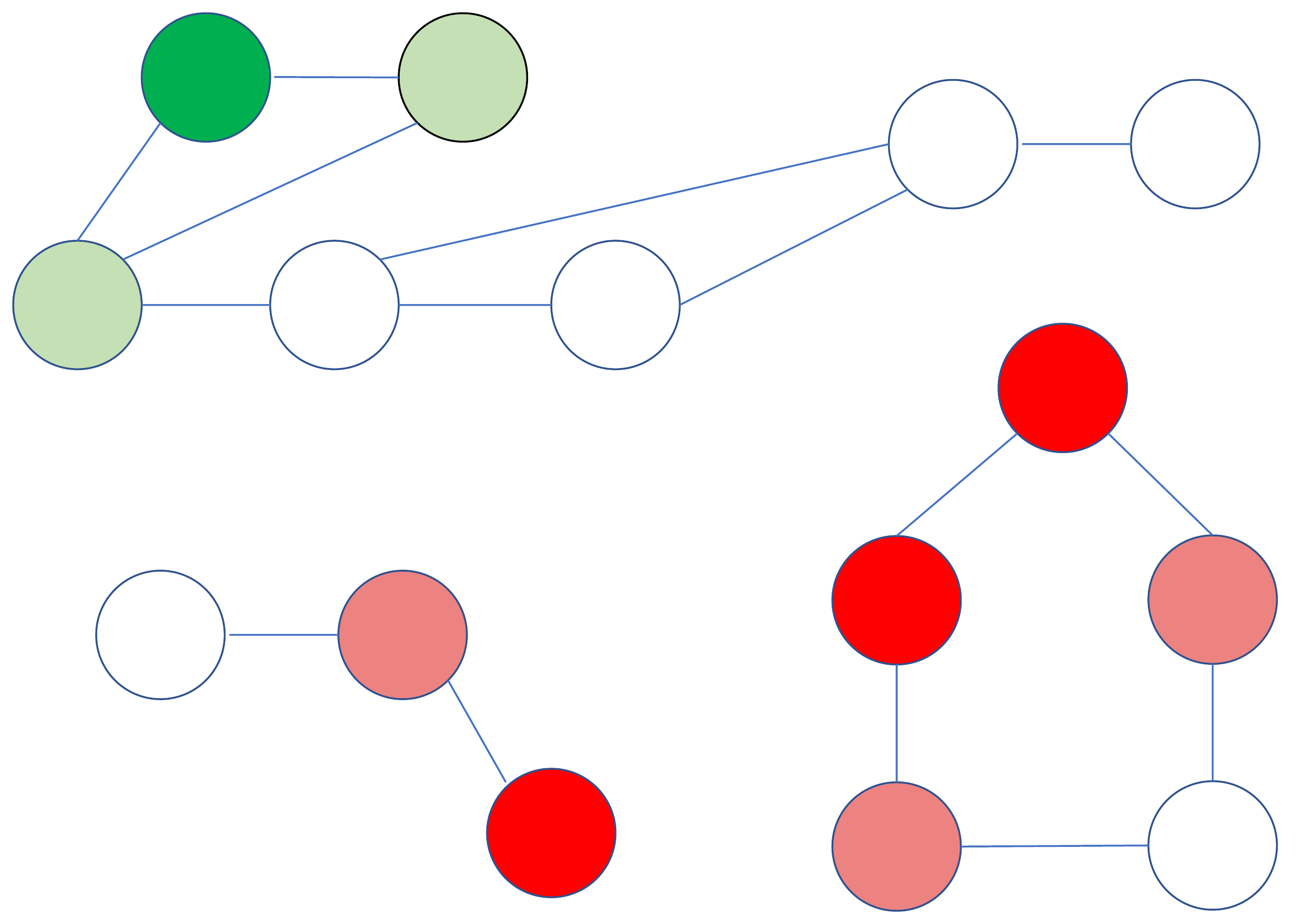}  
  \caption{Advanced supervised augmentation.\\ (9/15)}
  \label{fig:sub-second}
\end{subfigure}
\hfill
\begin{subfigure}{.19\textwidth}
  \centering
  % include second image
  \includegraphics[width=0.8\linewidth]{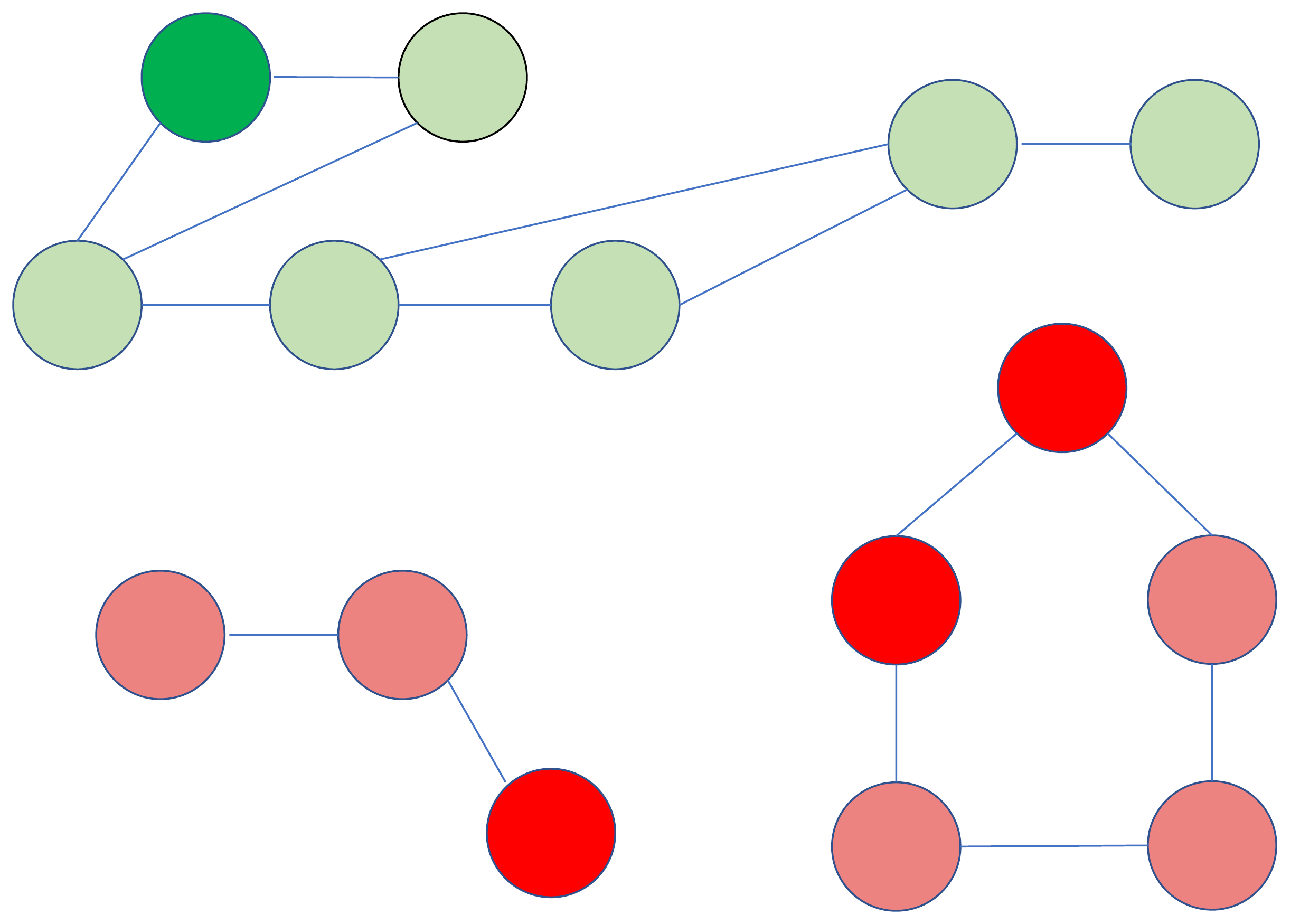}  
  \caption{UDA with advanced augmentation. (15/15)}
  \label{fig:sub-second}
\end{subfigure}
\hfill
\begin{subfigure}{.19\textwidth}
  \centering
  % include second image
  \includegraphics[width=0.8\linewidth]{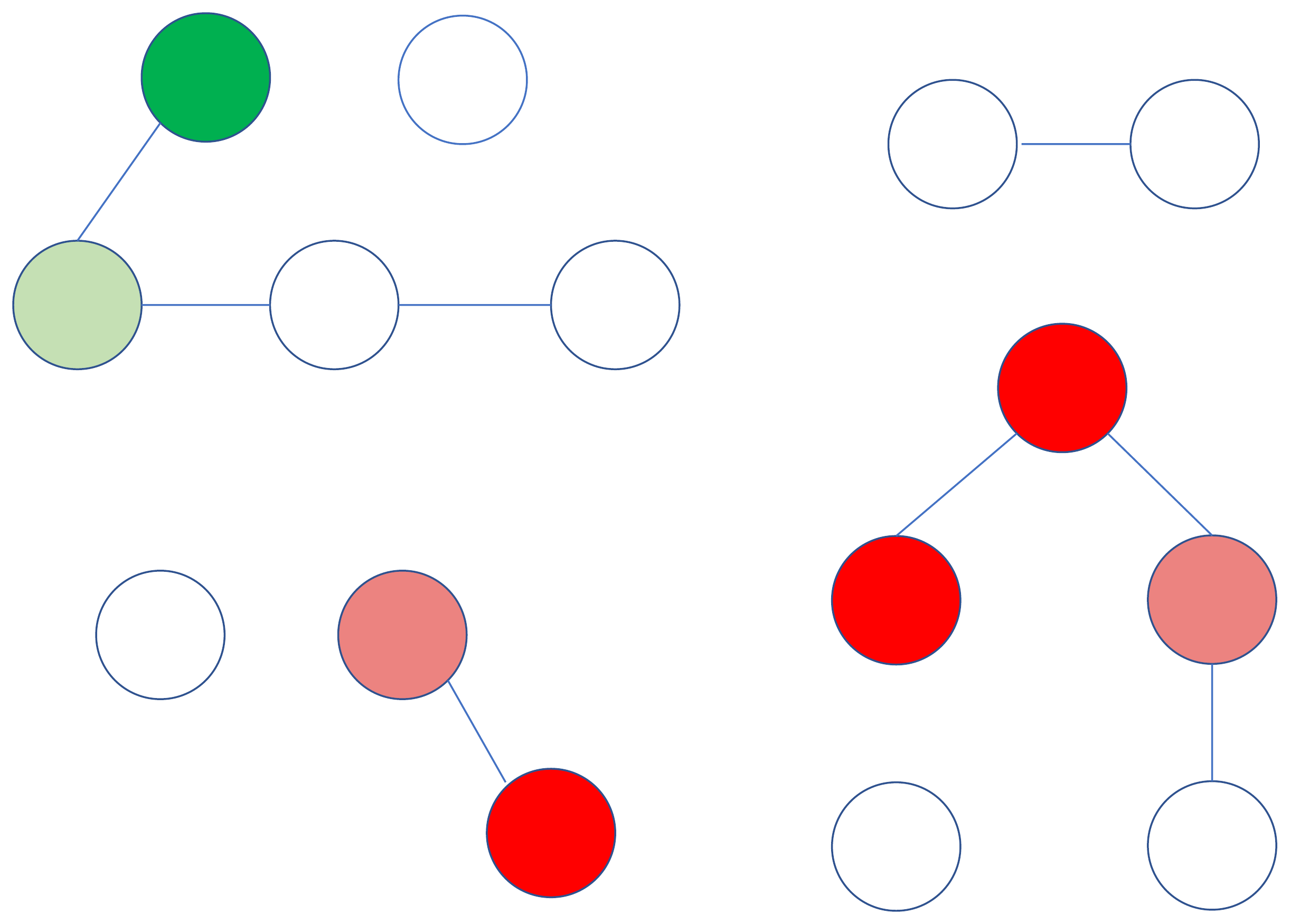}  
  \caption{Simple supervised augmentation.\\ (7/15)}
  \label{fig:sub-second}
\end{subfigure}
\hfill
\begin{subfigure}{.19\textwidth}
  \centering
  % include second image
  \includegraphics[width=0.8\linewidth]{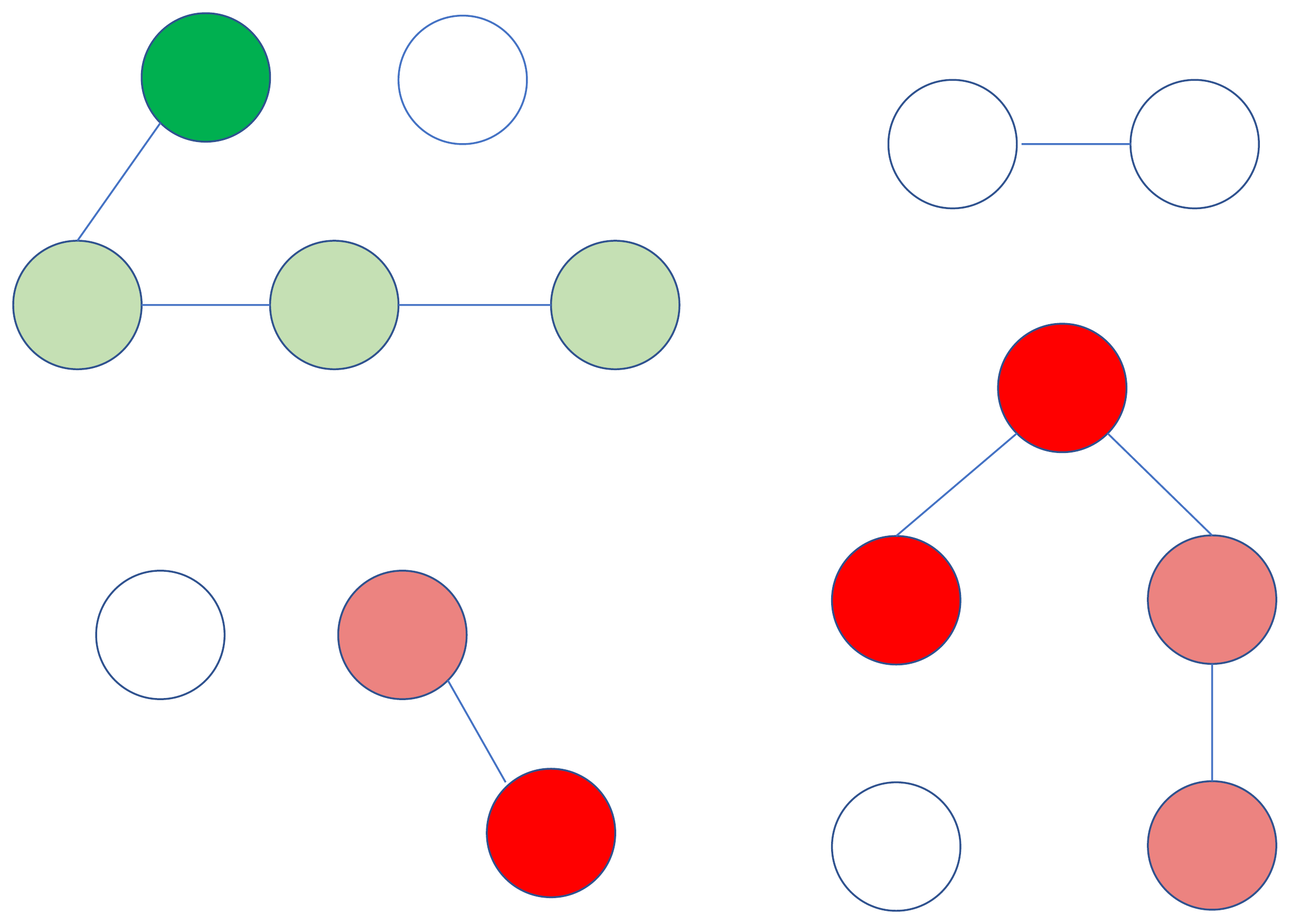}  
  \caption{UDA with simple augmentation.\\ (10/15)}
  \label{fig:sub-second}
\end{subfigure}
\caption{Prediction results of different settings, where green and red nodes are labeled nodes, white nodes are unlabeled nodes whose labels cannot be determined and light green nodes and light red nodes are unlabeled nodes whose labels can be correctly determined. The accuracy of different settings are shown in $(\cdot)$.}
\label{fig:alg_compare}
\vspace{-0.7em}
\end{figure}

With the intuition described, we state our formal results.
Without loss of generality, assume there are $k$ components in the graph.
For each component $C_i (i = 1, \dots, k)$, let $P_i$ be the total probability mass that an observed labeled example fall into the $i$-th component, i.e., $P_i = \sum_{x \in C_i} p_L(x)$.
The following theorem characterizes the relationship between UDA error rate and the amount of labeled examples.

\begin{theorem}\label{thm:uda_label_requirement}
Under UDA, let $Pr(\mathcal{A})$ denote the probability that the algorithm \textit{cannot} infer the label of a new test example given $m$ labeled examples from $P_{L}$. $Pr(\mathcal{A})$ is given by
\[
    Pr(\mathcal{A})= \sum_{i} P_i(1 - P_i)^m.
\]
In addition,  $O(k / \epsilon)$ labeled examples can guarantee an error rate of $O(\epsilon)$, i.e.,
\[
    m = O(k/\epsilon) \implies Pr(\mathcal{A}) = O(\epsilon).
\]
\end{theorem}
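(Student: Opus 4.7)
The plan is to reduce the analysis to a straightforward ``balls-in-bins'' computation, exploiting the connected-component structure established in the sketch just before the theorem statement. First, I would argue that UDA correctly recovers the label of any test point $x \in C_i$ provided that at least one of the $m$ labeled samples $(x^*, f^*(x^*))$ also lies in $C_i$. By the reversibility and label-preserving assumptions on $q$, the consistency term in Eq.~\eqref{eq:uda} forces $p_\theta(y\mid \cdot)$ to agree between any two $q$-connected points; iterating along a path inside $C_i$ then propagates the label of $x^*$ deterministically to every node of $C_i$. Consequently, the event $\mathcal{A}$ that UDA \emph{cannot} infer the label of a fresh test point is precisely the event that this test point lands in a component which received none of the $m$ labeled draws.

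Next, I would compute the probability of $\mathcal{A}$ by conditioning on which component the test point falls into. Assuming the test point and the $m$ labeled points are drawn i.i.d.\ from $p_L$ (so the component-wise masses coincide with the $P_i$), the probability of the test point lying in $C_i$ is $P_i$, and, independently, the probability that none of the $m$ labeled draws hits $C_i$ is $(1-P_i)^m$. Summing over the $k$ components and using disjointness yields
\[
\Pr(\mathcal{A}) \;=\; \sum_{i=1}^{k} P_i\,(1-P_i)^m,
\]
which is the first claim.

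For the quantitative bound, I would apply $(1-P_i)^m \le e^{-m P_i}$ termwise and then optimise the univariate function $t \mapsto t\,e^{-mt}$ on $t\ge 0$. Its maximum is attained at $t = 1/m$ with value $1/(em)$, so each summand satisfies $P_i\,e^{-m P_i} \le 1/(em)$, and hence
\[
\Pr(\mathcal{A}) \;\le\; \frac{k}{e\,m}.
\]
Choosing $m \ge k/(e\,\epsilon)$, i.e.\ $m = O(k/\epsilon)$, then delivers $\Pr(\mathcal{A}) = O(\epsilon)$, completing the second claim.

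The main subtlety, more than a technical obstacle, is the first step: justifying rigorously that minimising the consistency objective produces label propagation across an \emph{entire} component rather than only between directly connected neighbours. Here one needs the in-domain assumption (so that each augmented $\hat{x}$ reappears as an anchor $x_2$ in the outer expectation of Eq.~\eqref{eq:uda}) together with reversibility (so that the connectivity relation in $G_{p_U}$ is symmetric and paths can be traversed in either direction). Once these conditions are unpacked and the reduction to the disjoint-component picture is made explicit, the remaining calculation is an elementary probability argument as above.
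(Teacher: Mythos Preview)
Your derivation of the formula $\Pr(\mathcal{A})=\sum_i P_i(1-P_i)^m$ matches the paper's exactly: both condition on the component of the test point and use independence of the $m$ labeled draws.

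For the quantitative bound, however, you take a genuinely different route. The paper poses the constrained optimisation problem $\max \sum_i P_i(1-P_i)^m$ subject to $\sum_i P_i=1$, writes down the Lagrangian, argues via the KKT stationarity condition that the maximiser is the uniform distribution $P_i=1/k$, and hence obtains $\Pr(\mathcal{A})\le (1-1/k)^m\le e^{-m/k}$. Your argument instead bounds each term individually: $(1-P_i)^m\le e^{-mP_i}$ and then $P_i e^{-mP_i}\le 1/(em)$ by optimising the one-variable function $t\mapsto t e^{-mt}$, yielding $\Pr(\mathcal{A})\le k/(em)$. Both bounds deliver $m=O(k/\epsilon)\Rightarrow\Pr(\mathcal{A})=O(\epsilon)$, so your proof is correct. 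Your route is more elementary (no constrained optimisation, no appeal to convexity or KKT), and it sidesteps a soft spot in the paper's argument: the KKT condition only locates a stationary point, and the claimed convexity of the objective is not obvious since $P\mapsto P(1-P)^m$ is not concave on all of $[0,1]$. On the other hand, the paper's bound $e^{-m/k}$ decays exponentially in $m/k$, whereas yours decays only like $k/m$; so the paper's approach, once the optimisation step is shored up, yields a sharper statement than the one actually claimed in the theorem.
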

\begin{proof}
Please see Appendix. \ref{sec:proof} for details.
\vspace{-1em}
\end{proof}
From the theorem, we can see the number of components, i.e. $k$, directly governs the amount of labeled data required to reach a desired performance.
As we have discussed above, the number of components effectively relies on the quality of an augmentation function, where better augmentation functions result in fewer components. 
This echoes our discussion of the benefits of  state-of-the-art data augmentation operations in generating more diverse examples.
Hence, with state-of-the-art augmentation operations, \name is able to achieve good performance using fewer labeled examples. 

\vspace{-0.3em}
\section{Experiments}
\vspace{-0.3em}

In this section, we evaluate \name on a variety of language and vision tasks.
For language, we rely on six text classification benchmark datasets, including IMDb, Yelp-2, Yelp-5, Amazon-2 and Amazon-5 sentiment classification and DBPedia topic classification~\cite{maas2011learning, zhang2015character}. 
For vision, we employ two smaller datasets CIFAR-10~\cite{krizhevsky2009learning}, SVHN~\cite{netzer2011reading}, which are often used to compare semi-supervised algorithms, as well as ImageNet~\cite{deng2009imagenet} of a larger scale to test the scalability of \name.
For ablation studies and experiment details, we refer readers to Appendix  \ref{apdx:exp} and Appendix \ref{sec:exp_details}.

\vspace{-0.3em}
\subsection{Correlation between Supervised and Semi-supervised Performances}
\vspace{-0.3em}

As the first step, we try to verify the fundamental idea of \name, i.e., there is a positive correlation of data augmentation's effectiveness in supervised learning and semi-supervised learning.
Based on Yelp-5 (a language task) and CIFAR-10 (a vision task), we compare the performance of different data augmentation methods in either fully supervised or semi-supervised settings.
For Yelp-5, apart from back-translation, we include a simpler method Switchout~\cite{wang2018switchout} which replaces a token with a  random token uniformly sampled from the vocabulary.
For CIFAR-10, we compare RandAugment with two simpler methods: (1) cropping \& flipping augmentation and (2) Cutout.

\begin{table}[h!]
\vspace{-0.3em}
\mbox{}\hfill
\begin{minipage}[t]{0.48\textwidth}
\footnotesize
\centering
	
\begin{tabular}{l|ccc}
		\toprule
		{\bf Augmentation} & Sup & Semi-Sup   \\
		(\# Sup examples) & (50k) & (4k) \\
		\midrule 
		Crop \& flip & 5.36 & 10.94 \\
		Cutout & 4.42 & 5.43 \\
		RandAugment & \textbf{4.23} & \textbf{4.32}  \\
		\bottomrule
	\end{tabular}
		\caption{Error rates on CIFAR-10.}
	\label{tab:cifar10_sup_vs_unsup}
	
\end{minipage}
~\hfill
\begin{minipage}[t]{0.48\textwidth}
	\centering
	\footnotesize
	
\begin{tabular}{l|ccc}
		\toprule
		{\bf Augmentation} & Sup & Semi-sup  \\
        (\# Sup examples) & (650k) & (2.5k) \\
		\midrule 
		\xmark  & 38.36 & 50.80 \\ 
		Switchout & 37.24  & 43.38  \\
		Back-translation  &  \textbf{36.71} & \textbf{41.35}  \\
		\bottomrule
	\end{tabular}
		\caption{Error rate on Yelp-5.}
	\label{tab:yelp_5_sup_vs_unsup}
	
\end{minipage}
\vspace{-1.5em}
\end{table}

Based on this setting, Table \ref{tab:cifar10_sup_vs_unsup} and Table \ref{tab:yelp_5_sup_vs_unsup}  exhibit a strong correlation of an augmentation's effectiveness between supervised and semi-supervised settings. This validates our idea of stronger data augmentations found in supervised learning can always lead to more gains when applied to the semi-supervised learning settings.

\vspace{-0.3em}

\subsection{Algorithm Comparison on Vision Semi-supervised Learning Benchmarks}
\label{sec:cifar_svhn_exp}
\vspace{-0.3em}

With the correlation established above, the next question we ask is how well \name performs compared to existing semi-supervised learning algorithms.
To answer the question, we focus on the most commonly used semi-supervised learning benchmarks CIFAR-10 and SVHN.

\textbf{Vary the size of labeled data.} Firstly, we follow the settings in \cite{oliver2018realistic} and employ Wide-ResNet-28-2~\cite{zagoruyko2016wide,he2016deep} as the backbone model and evaluate \name with varied supervised data sizes.
Specifically, we compare \name with two highly competitive baselines: (1) Virtual adversarial training (VAT)~\cite{miyato2018virtual}, an algorithm that generates adversarial Gaussian noise on input, and (2) MixMatch~\cite{berthelot2019mixmatch},  a parallel work that combines previous advancements in semi-supervised learning. The comparison is shown in Figure \ref{fig:cifar_svhn_vary_sup} with two key observations.

\begin{itemize}[leftmargin=*,itemsep=0em,topsep=0em]
\item First, \name consistently outperforms the two baselines given different sizes of labeled data.
\item Moreover, the performance difference between \name and VAT shows the superiority of data augmentation based noise. 
The difference of \name and VAT is essentially the noise process. While the noise produced by VAT often contain high-frequency artifacts that do not exist in real images, data augmentation mostly generates diverse and realistic images.
\end{itemize}
\begin{figure}[h!]
\vspace{-1em}
    \centering
    \begin{subfigure}[t]{0.45\textwidth}
        \centering
    \includegraphics[width=\textwidth]{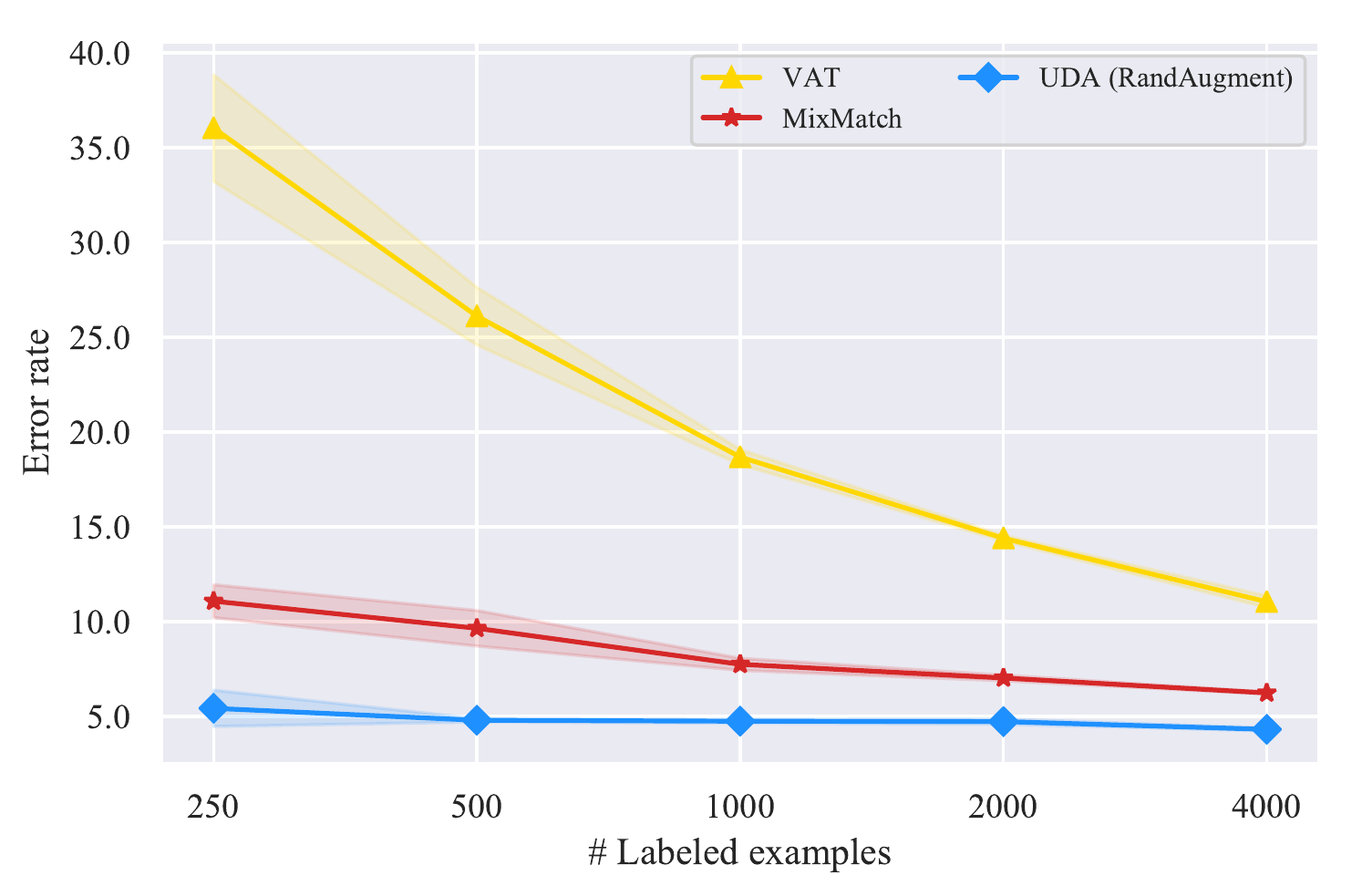}
    \caption{CIFAR-10}
    \label{fig:cifar_vary_sup}
    \end{subfigure}%
    ~ 
    \begin{subfigure}[t]{0.45\textwidth}
        \centering
	\includegraphics[width=\textwidth]{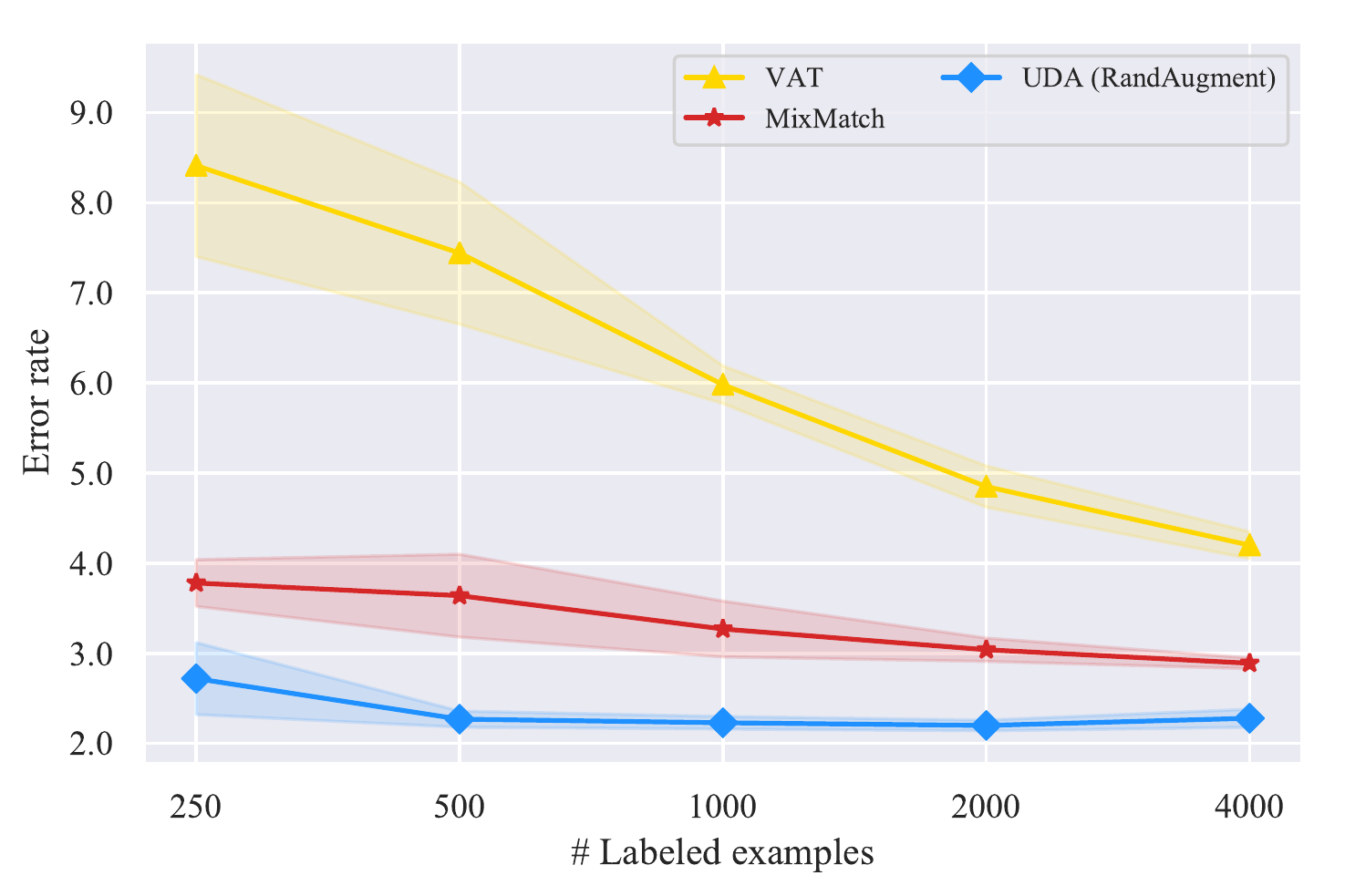}
	\caption{SVHN}
	\label{fig:svhn_vary_sup}
    \end{subfigure}
\caption{Comparison with two semi-supervised learning methods on CIFAR-10 and SVHN with varied number of labeled examples.  }
\label{fig:cifar_svhn_vary_sup}
\vspace{-1em}
\end{figure}
 
\textbf{Vary model architecture.} Next, we directly compare \name with previously published results under different model architectures.
Following previous work, 4k and 1k labeled examples are used for CIFAR-10 and SVHN respectively.
As shown in Table \ref{tab:published_results}, given the same architecture, UDA outperforms all published results by significant margins and nearly matches the fully supervised performance, which uses 10x more labeled examples.
This shows the huge potential of state-of-the-art data augmentations under the consistency training framework in the vision domain.

\begin{table}[h!]
	\centering
	\footnotesize
    \begin{tabular}{l|cc|cc}
    \toprule
    	{\bf Method} & {\bf Model} & {\bf \# Param}  & CIFAR-10 (4k) & SVHN (1k) \\
    	%(\# Sup examples) & & (4k) & (1k) \\
    	\midrule 
    	$\Pi$-Model~\cite{laine2016temporal} & Conv-Large & 3.1M & 12.36 $\pm$ 0.31 & 4.82 $\pm$ 0.17 \\
    	Mean Teacher~\cite{tarvainen2017mean} & Conv-Large & 3.1M & 12.31 $\pm$ 0.28  &  3.95 $\pm$ 0.19 \\
    	VAT + EntMin~\cite{miyato2018virtual} & Conv-Large & 3.1M & 10.55 $\pm$ 0.05 &  3.86 $\pm$ 0.11 \\
    	SNTG~\cite{luo2018smooth} & Conv-Large & 3.1M & 10.93 $\pm$ 0.14 & 3.86 $\pm$ 0.27  \\
    	%VAdD~\cite{park2018adversarial} & Conv-Large & 3.1M & 11.32 $\pm$ 0.11 & 4.16 $\pm$ 0.08 \\
    	%Fast-SWA~\cite{athiwaratkun2018there} & Conv-Large & 3.1M & 9.05 & -\\
    	ICT~\cite{verma2019interpolation} & Conv-Large & 3.1M & 7.29 $\pm$ 0.02 &   3.89 $\pm$ 0.04 \\
    	Pseudo-Label~\cite{lee2013pseudo} & WRN-28-2 & 1.5M &  16.21 $\pm$ 0.11 & 7.62 $\pm$ 0.29 \\
    	LGA + VAT~\cite{jackson2019semi} & WRN-28-2 & 1.5M & 12.06 $\pm$ 0.19 &  6.58 $\pm$ 0.36 \\
%    	mixmixup~\cite{Hataya2019unifying} & WRN-28-2  & 1.5M & 10 & - \\
        ICT~\cite{verma2019interpolation} & WRN-28-2 & 1.5M & 7.66 $\pm$ 0.17 &  3.53 $\pm$ 0.07 \\
        MixMatch~\cite{berthelot2019mixmatch} & WRN-28-2 & 1.5M & 6.24 $\pm$ 0.06 & 2.89 $\pm$ 0.06 \\
    	Mean Teacher~\cite{tarvainen2017mean} & Shake-Shake & 26M & 6.28 $\pm$ 0.15  &  - \\

        Fast-SWA~\cite{athiwaratkun2018there} & Shake-Shake & 26M & 5.0 & -\\
        MixMatch~\cite{berthelot2019mixmatch} & WRN & 26M & 4.95 $\pm$ 0.08 & -\\

        \midrule
    	 \name (RandAugment) & WRN-28-2 & 1.5M & 4.32 $\pm$ 0.08 &  \textbf{2.23 $\pm$ 0.07} \\
    	 \name (RandAugment) & Shake-Shake  & 26M & 3.7 & - \\
    	\name (RandAugment) & PyramidNet  & 26M & \textbf{2.7} & - \\
    	\bottomrule
    \end{tabular}
        \caption{Comparison between methods using different models where PyramidNet is used with ShakeDrop regularization. On CIFAR-10, with only 4,000 labeled examples, \name matches the performance of fully supervised Wide-ResNet-28-2 and PyramidNet+ShakeDrop, where they have an error rate of 5.4 and 2.7 respectively when trained on 50,000 examples without RandAugment.  On SVHN, \name also matches the performance of our fully supervised model trained on 73,257 examples without RandAugment, which has an error rate of 2.84.} 
        \label{tab:published_results}
        \vspace{-2.5em}
\end{table}

\vspace{-0.3em}
\subsection{Evaluation on Text Classification Datasets}
\label{sec:text_exp}
\vspace{-0.3em}

Next, we further evaluate \name in the language domain.
% Throughout this set of experiment, we employ the Transformer architecture~\cite{vaswani2017attention} as used in BERT~\cite{devlin2018bert}.
Moreover, in order to test whether \name can be combined with the success of unsupervised representation learning, such as BERT~\cite{devlin2018bert}, we further consider four initialization schemes: (a) random Transformer; (b) \bertbase; (c) \bertlarge; (d) \bertft: \bertlarge fine-tuned on in-domain unlabeled data\footnote{One exception is that we do not pursue \bertft on DBPedia as fine-tuning BERT on DBPedia does not yield further performance gain.  This is probably due to the fact that DBPedia is based on Wikipedia while BERT is already trained on the whole Wikipedia corpus.}.
Under each of these four initialization schemes, we compare the performances with and without \name.

\begin{table}[h!]
\vspace{-1em}
	\footnotesize
	\centering
	\renewcommand{\arraystretch}{0.7} % General space between rows (1 standard)
	\begin{tabular}{lc|cccccc}
		\toprule
		\multicolumn{8}{c}{\bf Fully supervised baseline} \\
		\midrule
		\multicolumn{2}{c|}{\bf Datasets} & IMDb & Yelp-2 & Yelp-5 & Amazon-2 & Amazon-5 & DBpedia \\
		\multicolumn{2}{c|}{(\# Sup examples)} & (25k) & (560k) & (650k) & (3.6m) & (3m) & (560k) \\
		\midrule 
		\multicolumn{2}{l|}{Pre-BERT SOTA} & \emph{4.32} & 2.16 & 29.98 & 3.32 & 34.81 & 0.70 \\
		\multicolumn{2}{l|}{\bertlarge} & 4.51 & \emph{1.89} & \emph{29.32} & \emph{2.63} & \emph{34.17} & \emph{0.64} \\
		\midrule\midrule    
		\multicolumn{8}{c}{\bf Semi-supervised setting} \\
		\midrule
		\multirow{2}{*}{\bf Initialization} & \multirow{2}{*}{\bf \name}& IMDb & Yelp-2 & Yelp-5 & Amazon-2 & Amazon-5 & DBpedia  \\
		&& (20) & (20) & (2.5k) & (20) & (2.5k) & (140) \\
		\midrule
		\multirow{2}{*}{Random}    & \xmark & 43.27 & 40.25 & 50.80 & 45.39 & 55.70 & 41.14  \\
		& \cmark & 25.23 &  8.33 & 41.35 & 16.16 &  44.19 &  7.24\\
		\midrule
		\multirow{2}{*}{\bertbase} & \xmark & 18.40 & 13.60 & 41.00 & 26.75 & 44.09 &  2.58 \\
		& \cmark &  5.45 &  2.61 & 33.80 &  3.96 & 38.40 &  1.33 \\
		\midrule
		\multirow{2}{*}{\bertlarge}& \xmark & 11.72 & 10.55 & 38.90 & 15.54 & 42.30 & 1.68 \\
		& \cmark &  4.78 &  2.50 & 33.54 &  3.93 & 37.80 & 1.09 \\
		\midrule
		\multirow{2}{*}{\bertft}  & \xmark &  6.50 &  2.94 & 32.39 & 12.17  & 37.32  & - \\
		& \cmark & \textbf{4.20} & \textbf{2.05} & \textbf{32.08} & \textbf{3.50} & \textbf{37.12}  & -  \\
		\bottomrule
	\end{tabular}
	\caption{Error rates on text classification datasets. In the fully supervised settings, the pre-BERT SOTAs include ULMFiT~\cite{howard2018universal} for Yelp-2 and Yelp-5, DPCNN~\cite{johnson2017deep} for Amazon-2 and Amazon-5, Mixed VAT~\cite{sachan2018revisiting} for IMDb and DBPedia. All of our experiments use a sequence length of 512.}
	\label{tab:text_results}
\vspace{-1.5em}
\end{table}

The results are presented in Table \ref{tab:text_results} where we would like to emphasize three observations:
\begin{itemize}[leftmargin=*,itemsep=0em,topsep=0em]
\item First, even with very few labeled examples, \name can offer decent or even competitive performances compared to the SOTA model trained with full supervised data. 
Particularly, on binary sentiment analysis tasks, with only 20 supervised examples, \name outperforms the previous SOTA trained with full supervised data on IMDb and is competitive on Yelp-2 and Amazon-2.
\item Second, \name is complementary to transfer learning / representation learning. As we can see, when initialized with BERT and further finetuned on in-domain data, \name can still significantly reduce the error rate from $6.50$ to $4.20$ on IMDb.
\item Finally, we also note that for five-category sentiment classification tasks, there still exists a clear gap between \name with 500 labeled examples per class and BERT trained on the entire supervised set. Intuitively, five-category sentiment classifications are much more difficult than their binary counterparts.
This suggests a room for further improvement in the future.
\end{itemize}

\vspace{-0.3em}
\subsection{Scalability Test on the ImageNet Dataset}
\label{sec:imagenet_exp}
\vspace{-0.3em}

Then, to evaluate whether \name can scale to problems with a large scale and a higher difficulty, we now turn to the ImageNet dataset with ResNet-50 being the underlying architecture.
Specifically, we consider two experiment settings with different natures:
\begin{itemize}[leftmargin=*,itemsep=0em,topsep=0em,parsep=0em]
\item We use 10\% of the supervised data of ImageNet while using all other data as unlabeled data. 
As a result, the unlabeled exmaples are entirely in-domain.
\item In the second setting, we keep all images in ImageNet as supervised data. Then, we use the domain-relevance data filtering method to filter out 1.3M images from \jft . Hence, the unlabeled set is not necessarily in-domain.
\end{itemize}
The results are summarized in Table \ref{tab:imagenet}.
In both 10\% and the full data settings, \name consistently brings significant gains compared to the supervised baseline.
This shows \name is not only able to scale but also able to utilize out-of-domain unlabeled examples to improve model performance. 
In parallel to our work, S4L~\cite{zhai2019s} and CPC~\cite{henaff2019data} also show significant improvements on ImageNet.

\begin{table}[h!]
\vspace{-0.5em}
\footnotesize
\centering
\begin{tabular}{l|c|cc}
		\toprule
		{\bf Methods} & SSL & 10\% & 100\%  \\
		\midrule 
		ResNet-50 & \multirow{2}{*}{\xmark} &  55.09 / 77.26 & 77.28 / 93.73  \\
	    \ w. RandAugment &  & 58.84 / 80.56 & 78.43 / 94.37 \\
	    \midrule
		\name (RandAugment) & \cmark & \textbf{68.78 / 88.80} & \textbf{79.05 / 94.49} \\
		\bottomrule
	\end{tabular}
	\caption{Top-1 / top-5 accuracy on ImageNet with 10\% and 100\% of the labeled set. We use image size 224 and 331 for the 10\% and 100\% experiments respectively. } 
	\label{tab:imagenet}
\vspace{-1em}
\end{table}

\vspace{-0.3em}
\section{Related Work}
\vspace{-0.3em}
\label{sec:related}
Existing works in consistency training does make use of data augmentation~\cite{laine2016temporal, sajjadi2016regularization}; however, they only apply weak augmentation methods such as random translations and cropping.
In parallel to our work, ICT~\cite{verma2019interpolation} and MixMatch~\cite{berthelot2019mixmatch} also show improvements for semi-supervised learning. These methods employ mixup~\cite{zhang2017mixup} on top of simple augmentations such as flipping and cropping; instead, UDA emphasizes on the use of state-of-the-art data augmentations, leading to significantly better results on CIFAR-10 and SVHN.
In addition, UDA is also applicable to language domain and can also scale well to more challenging vision datasets, such as ImageNet.

Other works in the consistency training family mostly differ in how the noise is defined: Pseudo-ensemble~\cite{bachman2014learning} directly applies Gaussian noise and Dropout noise; 
VAT~\cite{miyato2018virtual,miyato2016adversarial} defines the noise by approximating the direction of change in the input space that the model is most sensitive to; Cross-view training~\cite{clark2018semi} masks out part of the input data. Apart from enforcing consistency on the input examples and the hidden representations, another line of research enforces consistency on the model parameter space. Works in this category include Mean Teacher~\cite{tarvainen2017mean}, fast-Stochastic Weight Averaging~\cite{athiwaratkun2018there} and Smooth Neighbors on Teacher Graphs~\cite{luo2018smooth}. For a complete version of related work, please refer to Appendix \ref{sec:apdx_related_work}.

\vspace{-0.3em}
\section{Conclusion}
\vspace{-0.3em}
In this paper, we show that data augmentation and semi-supervised learning are well connected: better data augmentation can lead to significantly better semi-supervised learning. Our method, \name, employs state-of-the-art data augmentation found in supervised learning to generate diverse and realistic noise and enforces the model to be consistent with respect to these noise. For text, UDA combines well with representation learning, e.g., BERT.
For vision, \name outperforms prior works by a clear margin and nearly matches the performance of the fully supervised models trained on the full labeled sets which are one order of magnitude larger.
We hope that UDA will encourage future research to transfer advanced supervised augmentation to semi-supervised setting for different tasks.

\section*{Acknowledgements}
We want to thank Hieu Pham, Adams Wei Yu, Zhilin Yang and Ekin Dogus Cubuk for their tireless help to the authors on different stages of this project and thank Colin Raffel for pointing out the connections between our work and previous works. We also would like to thank Olga Wichrowska, Barret Zoph, Jiateng Xie, Guokun Lai, Yulun Du, Chen Dan, David Berthelot, Avital Oliver, Trieu Trinh, Ran Zhao, Ola Spyra, Brandon Yang, Daiyi Peng, Andrew Dai, Samy Bengio, Jeff Dean and the Google Brain team for insightful discussions and support to the work. Lastly, we thank anonymous reviewers for their valueable feedbacks.

\section*{Broader Impact}
This work show that it is possible to achieve great performance with limited labeled data. Hence groups/institutes with limited budgets for annotating data may benefit from this research. To the best of our knowledge, nobody will be put at disadvantage from this research. 
Our method does not leverage biases in the data. Our tasks include standard benchmarks such as IMDb, CIFAR-10, SVHN and ImageNet.

\bibliographystyle{unsrt}
\bibliography{uda}

\clearpage

\appendix
\section{Extended Method Details}
In this section, we present some additional details used in our method. We introduce Training Signal Annealing in Appendix \ref{sec:tsa} and details for augmentation strategies in Appendix \ref{sec:apdx_augmentations}. 
\subsection{Training Signal Annealing for Low-data Regime}
\label{sec:tsa}
In semi-supervised learning, we often encounter a situation where there is a huge gap between the amount of unlabeled data and that of labeled data.
Hence, the model often quickly overfits the limited amount of labeled data while still underfitting the unlabeled data.
To tackle this difficulty, we introduce a new training technique, called Training Signal Annealing (TSA), which
gradually releases the ``training signals'' of the labeled examples as training progresses.
Intuitively, we only utilize a labeled example if the model's confidence on that example is lower than a predefined threshold which increases according to a schedule.
Specifically, at training step $t$, if the model's predicted probability for the correct category $p_\theta(y^* \mid x)$
is higher than a threshold $\eta_t$, we remove that example from the loss function.
Suppose $K$ is the number of categories, by gradually increasing $\eta_t$ from $\frac{1}{K}$ to $1$, the threshold $\eta_t$ serves as a ceiling to prevent over-training on easy labeled examples. 

We consider three increasing schedules of $\eta_t$ with different application scenarios.
Let $T$ be the total number of training steps, the three schedules are shown in Figure \ref{fig:tsa}.
Intuitively, when the model is prone to overfit, e.g., when the problem is relatively easy or the number of labeled examples is very limited, the exp-schedule is most suitable as the supervised signal is mostly released at the end of training. In contrast, when the model is less likely to overfit (e.g., when we have abundant labeled examples or when the model employs effective regularization), the log-schedule can serve well. 

\begin{figure}[h!]
    \centering
    \begin{subfigure}[t]{0.32\textwidth}
        \centering
    \includegraphics[width=\textwidth]{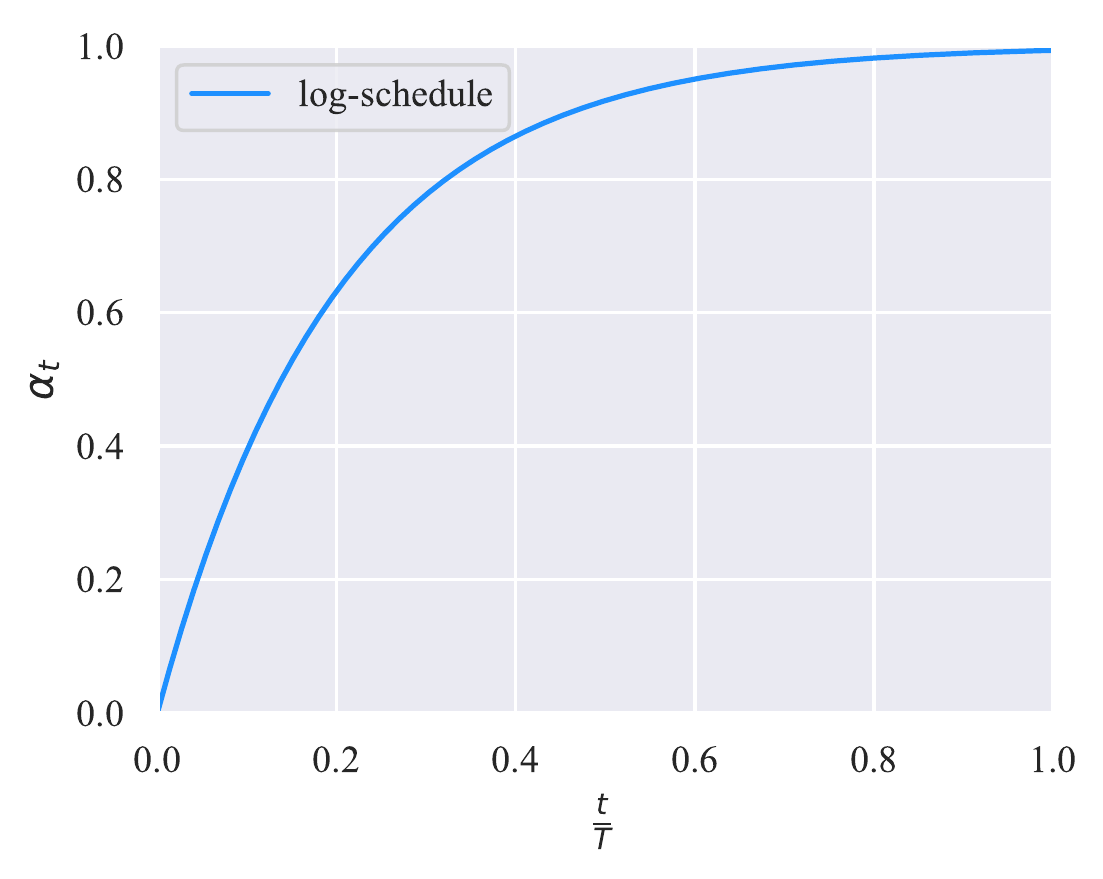}
    \end{subfigure}
    ~ 
    \begin{subfigure}[t]{0.32\textwidth}
        \centering
    \includegraphics[width=\textwidth]{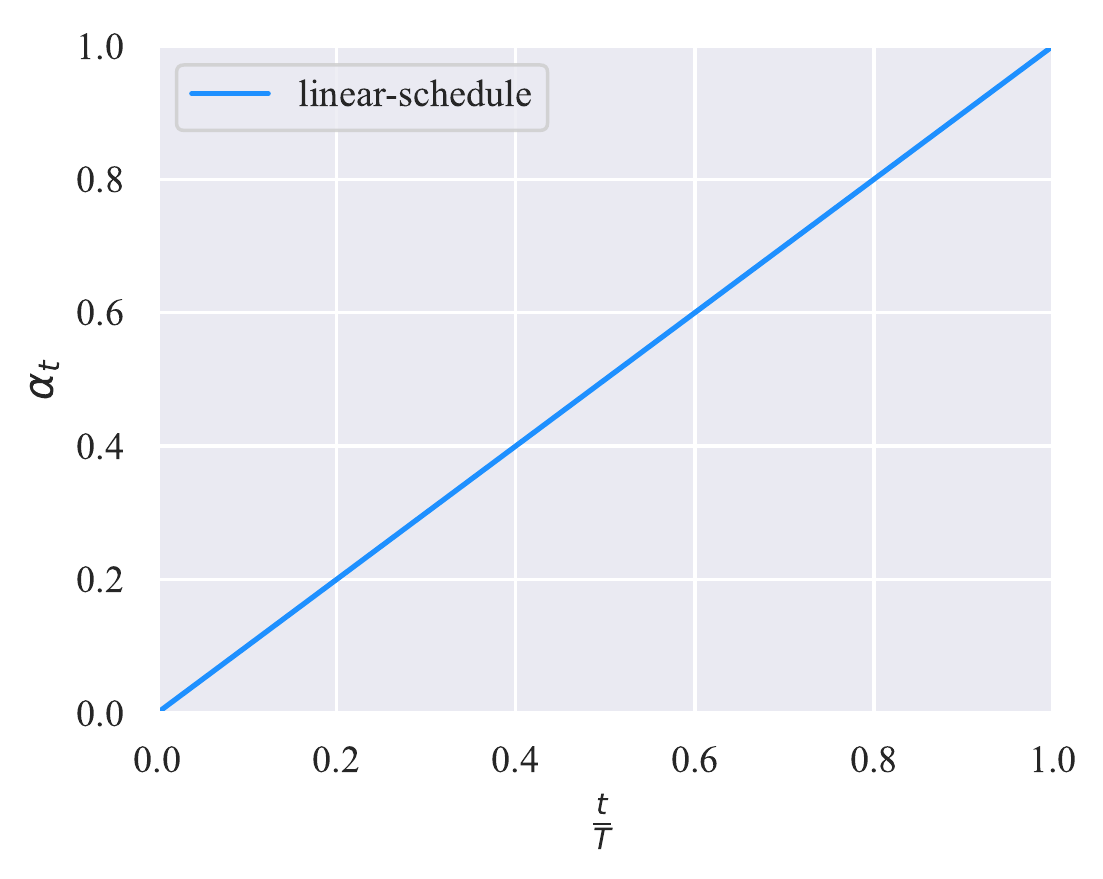}
    \end{subfigure}
    ~ 
    \begin{subfigure}[t]{0.32\textwidth}
        \centering
    \includegraphics[width=\textwidth]{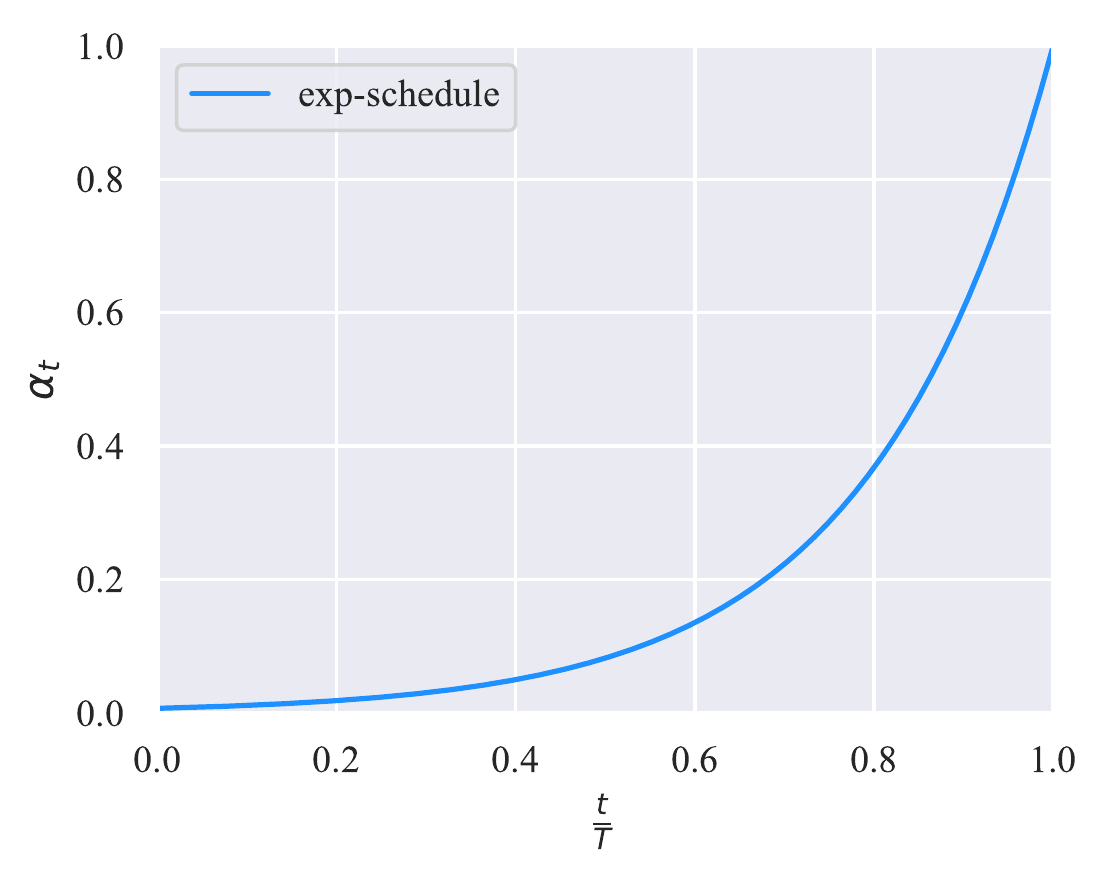}
    \end{subfigure}
\caption{Three schedules of TSA. We set $\eta_t= \alpha_t * (1-\frac{1}{K}) + \frac{1}{K}$. $\alpha_t$ is set to $1-\exp(-\frac{t}{T} * 5)$, $\frac{t}{T}$ and $\exp((\frac{t}{T} - 1) * 5)$ for the log, linear and exp schedules.}
\label{fig:tsa}
\end{figure}

\subsection{Extended Augmentation Strategies for Different Tasks}
\label{sec:apdx_augmentations}

\textbf{Discussion on Trade-off Between Diversity and Validity for Data Augmentation. }
Despite that state-of-the-art data augmentation methods can generate diverse and valid augmented examples as discussed in section \ref{sec:uda}, there is a trade-off between diversity and validity since diversity is achieved by changing a part of the original example, naturally leading to the risk of altering the ground-truth label. 
We find it beneficial to tune the trade-off between diversity and validity for data augmentation methods.
For text classification, we tune the temperature of random sampling. On the one hand, when we use a temperature of $0$, decoding by random sampling degenerates into greedy decoding and generates perfectly valid but identical paraphrases. 
On the other hand, when we use a temperature of $1$, random sampling generates very diverse but barely readable paraphrases. We find that setting the Softmax temperature to $0.7, 0.8$ or $0.9$ leads to the best performances. 

\textbf{RandAugment Details.}
In our implementation of RandAugment, each sub-policy is composed of two operations, where each operation is represented by the transformation name, probability, and magnitude that is specific to that operation. For example, a sub-policy can be [(Sharpness, 0.6, 2), (Posterize, 0.3, 9)]. 

For each operation, we randomly sample a transformation from $15$ possible transformations, a magnitude in $[1, 10)$ and fix the probability to $0.5$.  Specifically, we sample from the following $15$ transformations: Invert, Cutout, Sharpness, AutoContrast, Posterize, ShearX, TranslateX, TranslateY, ShearY, Rotate, Equalize, Contrast, Color, Solarize, Brightness.
We find this setting to work well in our first try and did not tune the magnitude  range and the probability. Tuning these hyperparameters might result in further gains in accuracy.

\textbf{TF-IDF based word replacing Details.}
Ideally, we would like the augmentation method to generate both diverse and valid examples. Hence, the augmentation is designed to retain keywords and replace uninformative words with other uninformative words. We use BERT's word tokenizer since BERT first tokenizes sentences into a sequence of words and then tokenize words into subwords although the model uses subwords as input. 

Specifically, Suppose $\mathrm{IDF}(w)$ is the IDF score for word $w$ computed on the whole corpus, and $\mathrm{TF}(w)$ is the TF score for word $w$ in a sentence. We compute the TF-IDF score as $\mathrm{TFIDF}(w)=\mathrm{TF}(w) \mathrm{IDF}(w)$. Suppose the maximum TF-IDF score in a sentence $x$ is $C=\max_{i} \mathrm{TFIDF}(x_i)$. To make the probability of having a word replaced to  negatively correlate with its TF-IDF score, we set the probability to  $\min(p (C - \mathrm{TFIDF}(x_i)) / Z, 1)$, where $p$ is a hyperparameter that controls the magnitude of the augmentation and $Z=\sum_i (C - \mathrm{TFIDF}(x_i)) / |x|$ is the average score. $p$ is set to 0.7 for experiments on DBPedia.

When a word is replaced, we sample another word from the whole vocabulary for the replacement. Intuitively, the sampled words should not be keywords to prevent changing the ground-truth labels of the sentence. To measure if a word is keyword, we compute a score of each word on the whole corpus. Specifically, we compute the score as  $S(w)=\mathrm{freq}(w)\mathrm{IDF}(w)$ where  $\mathrm{freq}(w)$ is the frequency of word $w$ on the whole corpus.  We set the probability of sampling word $w$ as $(\max_{w'}S(w') - S(w)) / Z'$ where $Z'=\sum_w \max_{w'}S(w') - S(w)$ is a normalization term.

\section{Extended Experiments}
\label{apdx:exp}

\subsection{Ablation Studies}

\paragraph{Ablation Studies for Unlabeled Data Size}
Here we present an ablation study for unlabeled data sizes. As shown in Table \ref{tab:cifar_10_unlabeled data} and Table \ref{tab:svhn_unlabeled data}, given the same number of labeled examples, reducing the number of unsupervised examples clearly leads to worse performance. 
In fact, having abundant unsupervised examples is more important than having more labeled examples  since reducing the unlabeled data amount leads to worse performance than reducing the labeled data by the same ratio. 
\begin{table}[H]
\centering
\small
\begin{tabular}{l|ccccc}
\toprule
\textbf{\# Unsup / \# Sup} & 250 & 500 & 1,000 & 2,000 & 4,000 \\
\midrule
50,000 & 5.43 $\pm$ 0.96 & 4.80 $\pm$ 0.09 & 4.75 $\pm$ 0.10 & 4.73 $\pm$ 0.14 & 4.32 $\pm$ 0.08 \\
20,000 & 11.01 $\pm$ 1.01 & 9.46 $\pm$ 0.14 & 8.57 $\pm$ 0.14 & 7.65 $\pm$ 0.17 & 7.31 $\pm$ 0.24 \\
10,000 & 23.17 $\pm$ 0.71 & 18.43 $\pm$ 0.43 & 15.46 $\pm$ 0.58 & 12.52 $\pm$ 0.13 & 10.32 $\pm$ 0.20 \\
5,000 & 35.41 $\pm$ 0.75 & 28.35 $\pm$ 0.60 & 22.06 $\pm$ 0.71 & 17.36 $\pm$ 0.15 & 13.19 $\pm$ 0.12 \\
\bottomrule
\end{tabular}
\caption{Error rate (\%) for CIFAR-10 with different amounts of labeled data and unlabeled data.} 
\label{tab:cifar_10_unlabeled data}
\end{table}

\begin{table}[H]
\centering
\small
\begin{tabular}{l|ccccc}
\toprule
\textbf{\# Unsup / \# Sup} & 250 & 500 & 1,000 & 2,000 & 4,000 \\
\midrule
73,257 & 2.72 $\pm$ 0.40 & 2.27 $\pm$ 0.09 & 2.23 $\pm$ 0.07 & 2.20 $\pm$ 0.06 & 2.28 $\pm$ 0.10  \\
20,000 & 5.59 $\pm$ 0.74 &  4.43 $\pm$ 0.15  & 3.81 $\pm$ 0.11 & 3.86 $\pm$ 0.14 & 3.64 $\pm$ 0.20 \\
10,000 & 17.13 $\pm$ 12.85 & 7.59 $\pm$ 1.01 & 5.76 $\pm$ 0.29 & 5.17 $\pm$ 0.12 & 5.40 $\pm$ 0.12  \\
5,000 & 31.58 $\pm$ 7.39 & 12.66 $\pm$ 0.81 &  6.28 $\pm$ 0.25 & 8.35 $\pm$ 0.36 & 7.76 $\pm$ 0.28 \\
\bottomrule
\end{tabular}
\caption{Error rate (\%) for SVHN with different amounts of labeled data and unlabeled data.} 
\label{tab:svhn_unlabeled data}
\end{table}

\paragraph{Ablations Studies on RandAugment} 

We hypothesize that the success of RandAugment should be credited to the diversity of the augmentation transformations, since RandAugment works very well for multiple different datasets while it does not require a search algorithm to find out the most effective policies. To verify this hypothesis, we test UDA's performance when we restrict the number of possible transformations used in RandAugment. As shown in Figure \ref{fig:randaugment_op}, the performance gradually improves as we use more augmentation transformations.
\begin{figure}[h!]
    \centering
    \footnotesize
    \includegraphics[width=0.4\textwidth]{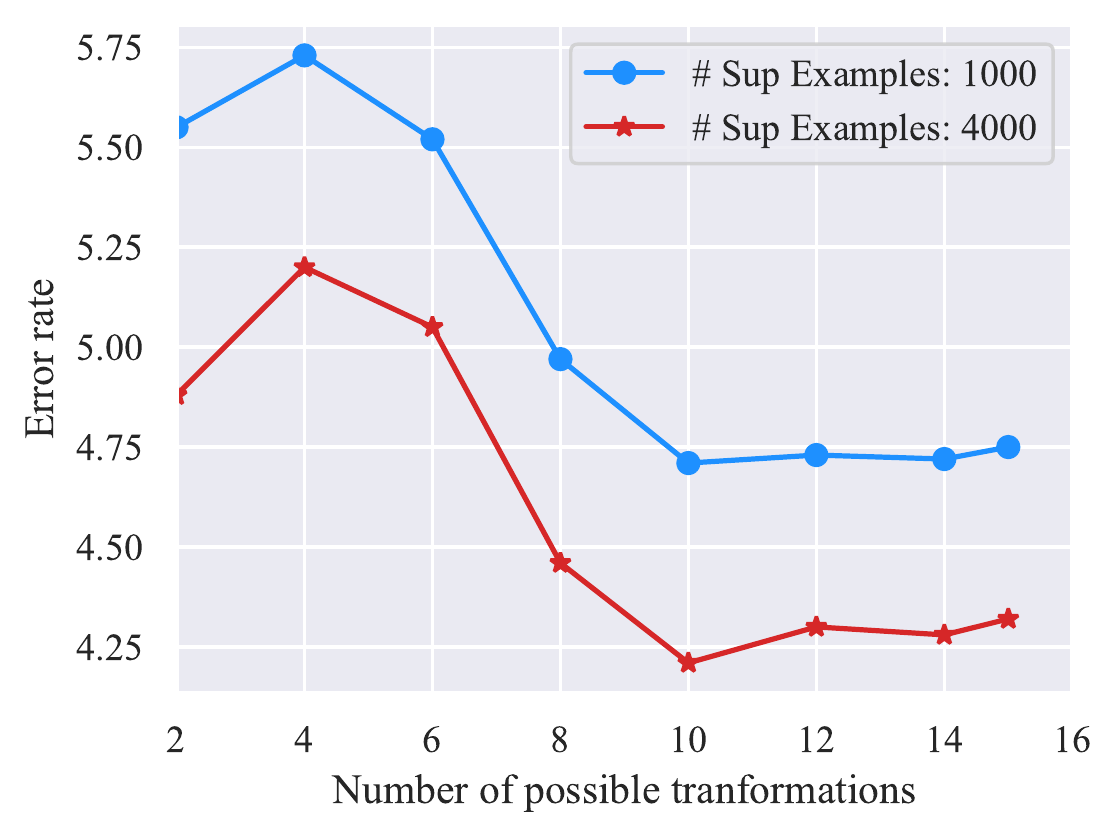}
    \caption{Error rate of UDA on CIFAR-10 with different numbers of possible transformations in RandAugment. UDA achieves lower error rate when we increase the number of possible transformations, which demonstrates the importance of a rich set of augmentation transformations.}
    \label{fig:randaugment_op}
\end{figure}

\paragraph{Ablation Studies for TSA}
We study the effect of TSA on Yelp-5 where we have $2.5$k labeled examples and $6$m unlabeled examples. We use a randomly initialized transformer in this study to rule out factors of having a pre-trained representation.

As shown in Table \ref{tab:ablation_TSA}, on Yelp-5, where there is a lot more unlabeled data than labeled data,
TSA reduces the error rate from $50.81$ to $41.35$ when compared to the baseline without TSA. More specifically, the best performance is achieved when we choose to postpone releasing the supervised training signal to the end of the training, i.e, exp-schedule leads to the best performance.

\begin{table}[h!]
\centering
\footnotesize
	\begin{tabular}{l|cc}
		\toprule
		 \textbf{TSA schedule} & Yelp-5    \\
		 \midrule 
		 \xmark      & 50.81 \\
		 log-schedule  & 49.06  \\
		 linear-schedule  	 & 45.41  \\
		 exp-schedule  	 & \textbf{41.35} \\
		\bottomrule
	\end{tabular}
	\caption{Ablation study for Training Signal Annealing (TSA) on Yelp-5 and CIFAR-10. The shown numbers are error rates.}
	\label{tab:ablation_TSA}
\end{table}

\subsection{More Results on CIFAR-10, SVHN and Text Classification Datasets}
\label{sec:appdx_more_results}
\paragraph{Results with varied label set sizes on CIFAR-10}  In Table \ref{tab:cifar10_vary_sup}, we show results for compared methods of Figure \ref{fig:cifar_vary_sup} and results of Pseudo-Label~\cite{lee2013pseudo}, $\Pi$-Model~\cite{laine2016temporal}, Mean Teacher~\cite{tarvainen2017mean}. Fully supervised learning using 50,000 examples achieves an error rate of 4.23 and 5.36  with or without RandAugment. The performance of the baseline models are reported by MixMatch~\cite{berthelot2019mixmatch}. 

To make sure that the performance reported by MixMatch and our results are comparable, we reimplement MixMatch in our codebase and find that the results in the original paper is comparable but slightly better than our reimplementation, which results in a more competitive comparison for UDA. For example, our reimplementation of MixMatch achieves an error rate of 7.00 $\pm$ 0.59 and 7.39 $\pm$ 0.11 with 4,000 and 2,000 examples. 
\begin{table}[H]
\centering
\small
\begin{tabular}{l|ccccc}
\toprule
\textbf{Methods / \# Sup} & 250 & 500 & 1,000 & 2,000 & 4,000 \\
\midrule
Pseudo-Label & 49.98 $\pm$ 1.17 & 40.55 $\pm$ 1.70 & 30.91 $\pm$ 1.73 & 21.96 $\pm$ 0.42 & 16.21 $\pm$ 0.11 \\
$\Pi$-Model & 53.02 $\pm$ 2.05 & 41.82 $\pm$ 1.52 & 31.53 $\pm$ 0.98 & 23.07 $\pm$ 0.66 & 17.41 $\pm$ 0.37 \\
Mean Teacher & 47.32 $\pm$ 4.71 & 42.01 $\pm$ 5.86 & 17.32 $\pm$ 4.00 & 12.17 $\pm$ 0.22 & 10.36 $\pm$ 0.25 \\
VAT & 36.03 $\pm$ 2.82 & 26.11 $\pm$ 1.52 & 18.68 $\pm$ 0.40 & 14.40 $\pm$ 0.15 & 11.05 $\pm$ 0.31 \\
MixMatch & 11.08 $\pm$ 0.87 & 9.65 $\pm$ 0.94 & 7.75 $\pm$ 0.32 & 7.03 $\pm$ 0.15 & 6.24 $\pm$ 0.06 \\
UDA (RandAugment) & \textbf{5.43 $\pm$ 0.96} & \textbf{4.80 $\pm$ 0.09} &  \textbf{4.75 $\pm$ 0.10} & \textbf{4.73 $\pm$ 0.14} & \textbf{4.32 $\pm$ 0.08} \\
\bottomrule
\end{tabular}
\vskip 0.1in
\caption{Error rate (\%) for CIFAR-10.} 
\label{tab:cifar10_vary_sup}
\end{table}

\paragraph{Results with varied label set sizes on SVHN} 
In Table \ref{tab:svhn_vary_sup}, we similarly show results for compared methods of Figure \ref{fig:svhn_vary_sup} and results of methods mentioned above. Fully supervised learning using 73,257 examples achieves an error rate of 2.28 and 2.84  with or without RandAugment. The performance of the baseline models are reported by MixMatch~\cite{berthelot2019mixmatch}. Our reimplementation of MixMatch also resulted in comparable but higher error rates than the reported ones. 
\begin{table}[H]
\centering
\small
\begin{tabular}{l|ccccc}
\toprule
\textbf{Methods / \# Sup} & 250 & 500 & 1,000 & 2,000 & 4,000 \\
\midrule
Pseudo-Label & 21.16 $\pm$ 0.88 & 14.35 $\pm$ 0.37 & 10.19 $\pm$ 0.41 & 7.54 $\pm$ 0.27 & 5.71 $\pm$ 0.07 \\
$\Pi$-Model & 17.65 $\pm$ 0.27 & 11.44 $\pm$ 0.39 & 8.60 $\pm$ 0.18 & 6.94 $\pm$ 0.27 & 5.57 $\pm$ 0.14 \\
Mean Teacher & 6.45 $\pm$ 2.43 & 3.82 $\pm$ 0.17 & 3.75 $\pm$ 0.10 & 3.51 $\pm$ 0.09 & 3.39 $\pm$ 0.11 \\
VAT & 8.41 $\pm$ 1.01 & 7.44 $\pm$ 0.79 & 5.98 $\pm$ 0.21 & 4.85 $\pm$ 0.23 & 4.20 $\pm$ 0.15 \\
MixMatch & 3.78 $\pm$ 0.26 & 3.64 $\pm$ 0.46 & 3.27 $\pm$ 0.31 & 3.04 $\pm$ 0.13 & 2.89 $\pm$ 0.06 \\
UDA (RandAugment) & \textbf{2.72 $\pm$ 0.40}  & \textbf{2.27 $\pm$ 0.09} & \textbf{2.23 $\pm$ 0.07} & \textbf{2.20 $\pm$ 0.06} & \textbf{2.28 $\pm$ 0.10} \\
\bottomrule
\end{tabular}
\caption{Error rate (\%) for SVHN.} % ^* denotes our reimplementation.}
\label{tab:svhn_vary_sup}
\end{table}

\begin{figure}[h!]
    \centering
    \begin{subfigure}[t]{0.45\textwidth}
        \centering
    \includegraphics[width=\textwidth]{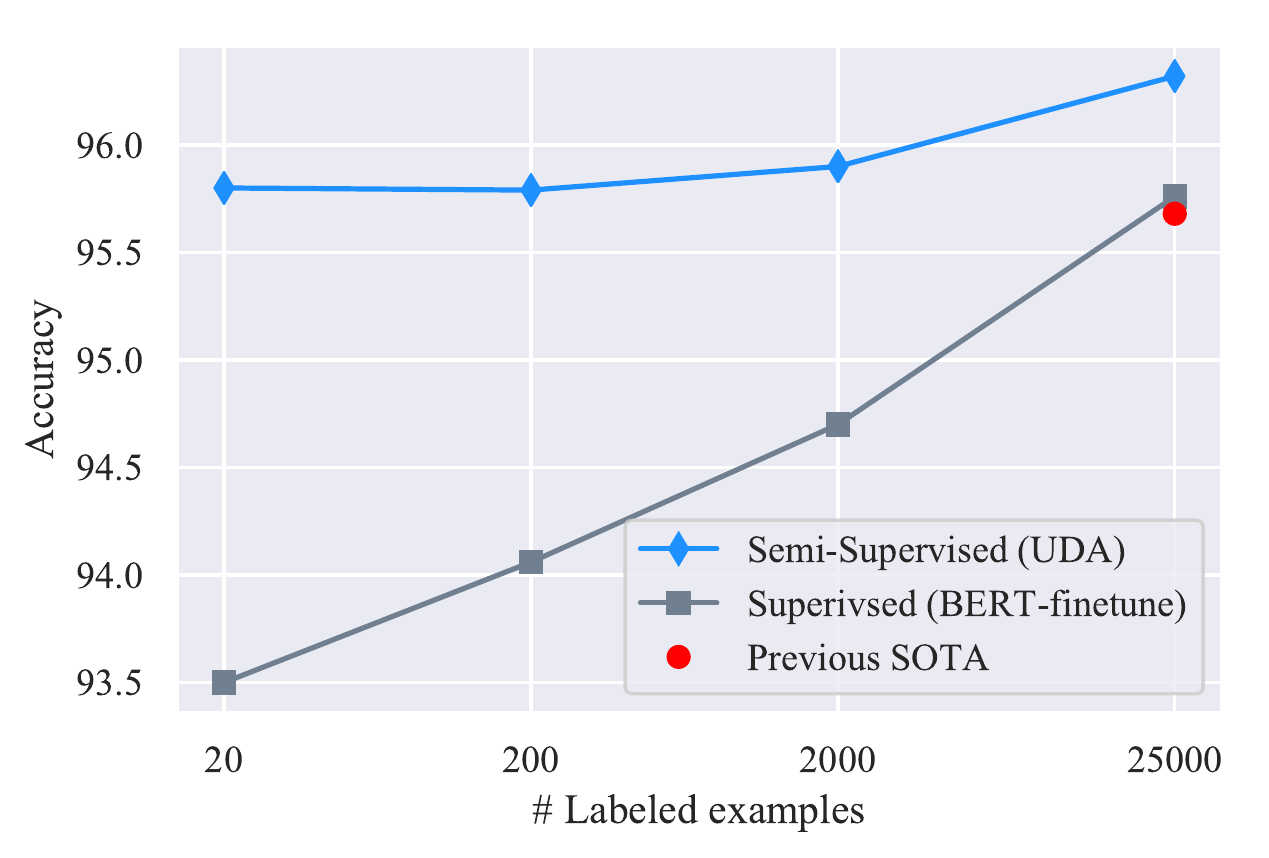}
    \caption{IMDb}
    \end{subfigure}%
    ~ 
    \begin{subfigure}[t]{0.45\textwidth}
        \centering
	\includegraphics[width=\textwidth]{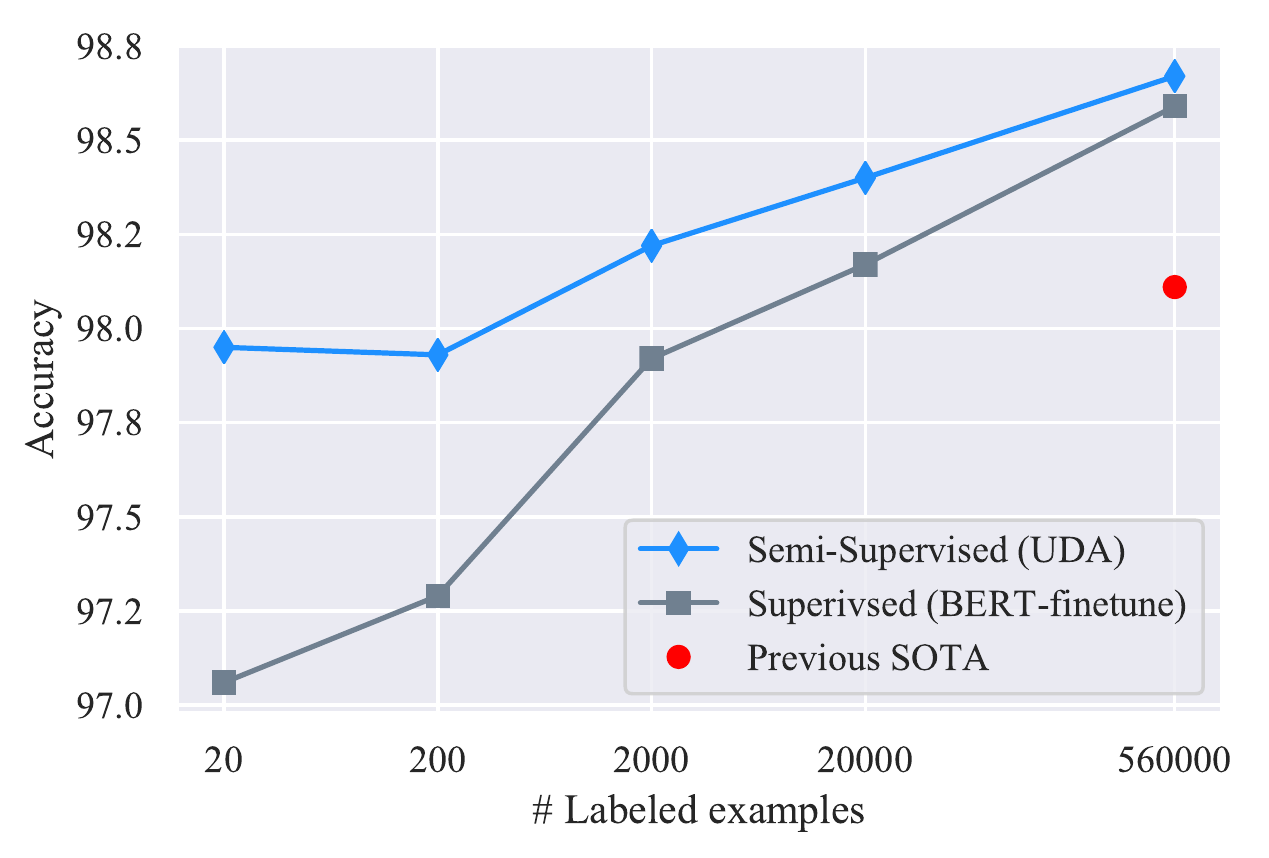}
	\caption{Yelp-2}
    \end{subfigure}
\caption{Accuracy on IMDb and Yelp-2 with different number of labeled examples. In the large-data regime, with the full training set of IMDb, UDA also provides robust gains.} 
\label{fig:text_vary_sup}
\end{figure}

\paragraph{Experiments on Text Classification with Varied Label Set Sizes}
We also try different data sizes on text classification tasks . As show in
Figure \ref{fig:text_vary_sup}, UDA leads to consistent improvements across all labeled data sizes on IMDb and Yelp-2.

\section{Proof for Theoretical Analysis}
\label{sec:proof}
Here, we provide a full proof for Theorem \ref{thm:uda_label_requirement}.
\begin{customthm}{1}
Under UDA, let $Pr(\mathcal{A})$ denote the probability that the algorithm \textit{cannot} infer the label of a new test example given $m$ labeled examples from $P_{L}$. $Pr(\mathcal{A})$ is given by
\[
    Pr(\mathcal{A})= \sum_{i} P_i(1 - P_i)^m.
\]
In addition,  $O(k / \epsilon)$ labeled examples can guarantee an error rate of $O(\epsilon)$, i.e.,
\[
    m = O(k/\epsilon) \implies Pr(\mathcal{A}) = O(\epsilon).
\]
\end{customthm}

\begin{proof}
Let $x'$ be the sampled test example. Then the probability of event $\mathcal{A}$ is

$$Pr(\mathcal{A})= \sum_{i} Pr(\mathcal{A}\mathrm{\ and\ }x' \in C_i)= \sum_{i} P_i(1 - P_i)^m$$

To bound the probability, we would like to find the maximum value of $\sum_{i} P_i(1 - P_i)^m$. We can define the following optimization function: 
\begin{align*}
 \min_{P}& -\sum_{c_i} P_i(1 - P_i)^m   \\
 \mathrm{s.t.} & \sum_{c_i} P_i=1
\end{align*}

The problem is a convex optimization problem and we can construct its the Lagrangian dual function:
$$\mathcal{L}=\sum_{i} P_i(1 - P_i)^m - \lambda (\sum_{i} P_i -1)$$
Using the KKT condition, we can take derivatives to $P_i$ and set it to zero. Then we have
$$\lambda=(1-mP_i) (1-P_i)^{m-1}$$
Hence $P_i=P_j$ for any $i\neq j$. Using the fact that $\sum_i P_i=1$, we have $$P_i=\frac{1}{k}$$ 

Plugging the result back into $Pr(\mathcal{A})= \sum_{i} P_i(1 - P_i)^m$, we have 
$$Pr(\mathcal{A})\leq (1-\frac{1}{k})^m=\exp(m\log(1-\frac{1}{k}))\leq \exp(-\frac{m}{k})$$ 

Hence  when $m=O(\frac{k}{\epsilon})$, we have $$Pr(\mathcal{A})=O(\epsilon)$$

\end{proof}

\section{Extended Related Work}
\label{sec:apdx_related_work}
\textbf{Semi-supervised Learning.} Due to the long history of semi-supervised learning (SSL), we refer readers to~\cite{chapelle2009semi} for a general review.
More recently, many efforts have been made to renovate classic ideas into deep neural instantiations.
For example, graph-based label propagation~\cite{zhu2003semi} has been extended to neural methods via graph embeddings~\cite{weston2012deep,yang2016revisiting} and later graph convolutions~\cite{kipf2016semi}.
Similarly, with the variational auto-encoding framework and reinforce algorithm, classic graphical models based SSL methods with target variable being latent can also take advantage of deep architectures~\cite{kingma2014semi,maaloe2016auxiliary,yang2017semi}.
Besides the direct extensions, it was found that training neural classifiers to classify out-of-domain examples into an additional class~\cite{salimans2016improved} works very well in practice.
Later, Dai et al.~\cite{dai2017good} shows that this can be seen as an instantiation of low-density separation. 

Apart from enforcing consistency on the noised input examples and the hidden representations, another line of research enforces consistency under different model parameters, which is complementary to our method. For example, Mean Teacher~\cite{tarvainen2017mean} maintains a teacher model with parameters being the ensemble of a student model's parameters and enforces the consistency between the predictions of the two models. Recently, fast-SWA~\cite{athiwaratkun2018there} improves Mean Teacher by encouraging the model to explore a diverse set of plausible parameters. In addition to parameter-level consistency, SNTG~\cite{luo2018smooth} also enforces input-level consistency by constructing a similarity graph between unlabeled examples.

\textbf{Data Augmentation.} Also related to our work is the field of data augmentation research.
Besides the conventional approaches and two data augmentation methods mentioned in Section \ref{sec:sda}, a recent approach MixUp~\cite{zhang2017mixup} goes beyond data augmentation from a single data point and performs interpolation of data pairs to achieve augmentation. Recently,  it has been shown that data augmentation can be regarded as a kind of explicit regularization methods similar to Dropout~\cite{hernandez2018data}. 

\textbf{Diverse Back Translation.} Diverse paraphrases generated by back-translation has been a key component in the significant performance improvements in our text classification experiments. We use random sampling instead of beam search for decoding similar to \cite{edunov2018understanding}. There are also recent works on generating diverse translations~\cite{he2018sequence, shen2019mixture, kool2019stochastic} that might lead to further improvements when used as data augmentations.

\textbf{Unsupervised Representation Learning.} Apart from semi-supervised learning, unsupervised representation learning offers another way to utilize unsupervised data. 
Collobert and Weston \cite{collobert2008unified} demonstrated that word embeddings learned by language modeling can improve the performance significantly on semantic role labeling.
Later, the pre-training of word embeddings was simplified and substantially scaled in Word2Vec~\cite{mikolov2013distributed} and Glove~\cite{pennington2014glove}.
More recently,  pre-training using language modeling and denoising auto-encoding has been shown to lead to significant improvements on many tasks in the language domain~\cite{dai2015semi,peters2018deep, radford2018improving,howard2018universal,devlin2018bert}. There is also a growing interest in self-supervised learning for vision~\cite{zhai2019s, henaff2019data, trinh2019selfie}.  

\textbf{Consistency Training in Other Domains.} Similar ideas of consistency training has also been applied in other domains. For example, recently, enforcing adversarial consistency on unsupervised data has also been shown to be helpful in adversarial robustness~\cite{stanforth2019labels,  zhai2019adversarially, najafi2019robustness, carmon2019unlabeled}.
Enforcing consistency w.r.t data augmentation has also been shown to work well for representation learning~\cite{hu2017learning, ye2019unsupervised}. Invariant representation learning ~\cite{liang2018learning,Salazar2018Invariant} applies the consistency loss not only to the predicted distributions but also to representations and has been shown significant improvements on speech recognition.

\section{Experiment Details}
\label{sec:exp_details}
\subsection{Text Classifications}
\textbf{Datasets.} In our semi-supervised setting, we randomly sampled labeled examples from the full supervised set\footnote{\url{http://bit.ly/2kRWoof}, \  \url{https://ai.stanford.edu/~amaas/data/sentiment/}} and use the same number of examples for each category. For unlabeled data, we use the whole training set for DBPedia, the concatenation of the training set and the unlabeled set for IMDb and  external data for Yelp-2, Yelp-5, Amazon-2 and Amazon-5 \cite{mcauley2015image}\footnote{\url{https://www.kaggle.com/yelp-dataset/yelp-dataset}, \url{http://jmcauley.ucsd.edu/data/amazon/}}. Note that for Yelp and Amazon based datasets, the label distribution of the unlabeled set might not match with that of labeled datasets since there are different number of examples in different categories. Nevertheless, we find it works well to use all the unlabeled data.

\textbf{Preprocessing.} 
We find the sequence length to be an important factor in achieving good performance. For all text classification datasets, we truncate the input to 512 subwords since BERT is pretrained with a maximum sequence length of 512.  Further, when the length of an example is greater than 512, we keep the last 512 subwords instead of the first 512 subwords as keeping the latter part of the sentence lead to better performances on IMDb.

\textbf{Fine-tuning BERT on in-domain unsupervised data.} We fine-tune the BERT model on in-domain unsupervised data using the code released by BERT. We try learning rate of 2e-5, 5e-5 and 1e-4, batch size of 32, 64 and 128 and number of training steps of 30k, 100k and 300k. We pick the fine-tuned models by the BERT loss on a held-out set instead of the performance on a downstream task. 

\textbf{Random initialized Transformer.} For the experiments with randomly initialized Transformer, we adopt hyperparameters for BERT base except that we only use 6 hidden layers and 8 attention heads. We also increase the dropout rate on the attention and the hidden states to 0.2, When we train UDA with randomly initialized architectures, we train UDA for 500k or 1M steps on Amazon-5 and Yelp-5 where we have abundant unlabeled data.

\textbf{BERT hyperparameters.} Following the common BERT fine-tuning procedure, we keep a dropout rate of 0.1, and try learning rate of 1e-5, 2e-5 and 5e-5 and batch size of 32 and 128. We also tune the number of steps ranging from 30 to 100k for various data sizes.

\textbf{UDA hyperparameters.} We set the weight on the unsupervised objective $\lambda$ to 1 in all of our experiments. We use a batch size of 32 for the supervised objective since 32 is the smallest batch size on v3-32 Cloud TPU Pod. We use a batch size of 224 for the unsupervised objective when the Transformer is initialized with BERT so that the model can be trained on more unlabeled data. We find that generating one augmented example for each unlabeled example is enough for \bertft.

All experiments in this part are performed on a v3-32 Cloud TPU Pod.

\subsection{Semi-supervised learning benchmarks CIFAR-10 and SVHN}   
\textbf{Hyperparameters for Wide-ResNet-28-2.} 
We train our model for 500K steps. We apply Exponential Moving Average to the parameters with a decay rate of 0.9999. We use a batch size of 64 for labeled data and a batch size of 448 for unlabeled data. The softmax temperature $\tau$ is set to 0.4. The confidence threshold $\beta$ is set to 0.8. We use a cosine learning rate decay schedule: $\cos(\frac{7t}{8T} * \frac{\pi}{2})$ where $t$ is the current step and $T$ is the total number of steps. We use a SGD optimizer with nesterov momentum with the momentum hyperparameter set to 0.9. 
In order to reduce training time, we generate augmented examples before training and dump them to disk. For CIFAR-10, we generate 100 augmented examples for each unlabeled example. Note that generating augmented examples in an online fashion is always better or as good as using dumped augmented examples since the model can see different augmented examples in different epochs, leading to more diverse samples. We report the average performance and the standard deviation for 10 runs.
Experiments in this part are performed on a Tesla V100 GPU.

\textbf{Hyperparameters for Shake-Shake and PyramidNet.} For the experiments with Shake-Shake, we train UDA for 300k steps and use a batch size of 128 for the supervised objective and use a batch size of 512 for the unsuperivsed objective. For the experiments with PyramidNet+ShakeDrop, we train UDA for 700k steps and use a batch size of 64 for the supervised objective and a batch size of 128 for the unsupervised objective. For both models, we use a learning rate of 0.03 and use a cosine learning decay with one annealing cycle following AutoAugment. Experiments in this part are performed on a v3-32 Cloud TPU v3 Pod. 

\subsection{ImageNet} 

\textbf{10\% Labeled Set Setting.} Unless otherwise stated, we follow the standard hyperparameters used in an open-source implementation of ResNet.\footnote{https://github.com/tensorflow/tpu/tree/master/models/official/resnet} 
For the 10\% labeled set setting, we use a batch size of 512 for the supervised objective and a batch size of 15,360 for the unsupervised objective. We use a base learning rate of 0.3 that is decayed by 10 for four times and set the weight on the unsupervised objective $\lambda$ to 20. We mask out unlabeled examples whose highest probabilities across categories are less than 0.5 and set the Softmax temperature to 0.4. The model is trained for 40k steps. Experiments in this part are performed on a v3-64 Cloud TPU v3 Pod. 

\textbf{Full Labeled Set Setting.} For experiments on the full ImageNet, we use a batch size of 8,192 for the supervised objective and a batch size of 16,384 for the unsupervised objective. The weight on the unsupervised objective $\lambda$ is set to 1. 
We use entropy minimization to sharpen the prediction.
We use a base learning rate of 1.6 and decay it by 10 for four times. Experiments in this part are performed on a v3-128 Cloud TPU v3 Pod.

\newpage

% Number of parameters in Conv Large: (3 * 3 * 3 + 1) * 128 + (3 * 3 * 128 + 1) * 128 * 2 + (3 * 3 * 128 + 1) * 256  + (3 * 3 * 256 + 1) * 256 * 2 + (3 * 3 * 256 + 1) * 512 + (1 * 1 * 512 + 1) * 256 + (1 * 1 * 256 + 1) * 128 + 128 * 10 + 10

\end{document}